\def\eqref#1{equation~\ref{#1}}
\def\1{\bm{1}}
\DeclareMathAlphabet{\mathsfit}{\encodingdefault}{\sfdefault}{m}{sl}
\SetMathAlphabet{\mathsfit}{bold}{\encodingdefault}{\sfdefault}{bx}{n}
\definecolor{mine}{RGB}{205,232,248}
\newtheorem{theorem}{Theorem}[section]
\newtheorem{proposition}{Proposition}[section]
\newtheorem{corollary}{Corollary}[section]
\theoremstyle{definition}
\newtheorem{definition}{Definition}[section]
\newtheorem{assumption}{Assumption}[section]
\theoremstyle{remark}
\newtheorem{remark}{Remark}[section]
\renewcommand{\eqref}[1]{\textup{(\ref{#1})}}
\newcommand{\bestmean}[1]{\textbf{#1}}
\newcommand{\bestcell}[2]{\cellcolor{green!15}\bestmean{#1}\!\pm\!#2}
\newcommand{\bestplain}[1]{\cellcolor{green!15}\bestmean{#1}}
\newcommand{\tighttab}{
  \setlength{\tabcolsep}{4pt}
  \renewcommand{\arraystretch}{0.98}
}
\providecommand{\citep}[1]{\cite{#1}}
\providecommand{\citet}[1]{\cite{#1}}
\title{Target-Aligned Fusion for Decision-Sequence Learning under Dynamics Shift}
\author{Guojian Wang, Quinson Hon, Xuyang Chen, and Lin Zhao%
}
\begin{document}
\maketitle
\begin{abstract}
External trajectories can improve offline decision-sequence learning, but dynamics shift may make some source subsequences inconsistent with the target environment. We study how to fuse such trajectories with limited target data for Decision Transformer learning under dynamics shift. We propose Target-Aligned Fusion (TAF), a principled framework that derives source-data fusion from a target-domain Bellman-risk criterion. Our analysis bounds this risk by two measurable data-alignment quantities: $\Delta_m$, the state-structure mismatch of retained fragments, and $\Delta_w$, the weighted transport cost from source to target transitions. This decomposition yields a gate--then--weight rule: source fragments are first filtered by target-side state-structure alignment, and retained transitions are then reweighted by local target feasibility. We instantiate this principle as TAF-DT, which uses maximum mean discrepancy (MMD) for fragment selection, optimal transport for feasibility-aware weighting, and the resulting fused law for advantage-token relabeling and Q-regularized Transformer training. Across gravity, kinematic, and morphology shifts on D4RL-style control tasks, TAF-DT achieves the strongest aggregate performance against strong offline RL and sequence-model baselines and produces more stable stitch-junction sequence semantics. Overall, these results indicate that aligning external trajectories to target-domain structure and feasibility is a practical way to exploit source data under dynamics shift.

\end{abstract}

\begin{IEEEkeywords}
Distribution-shifted decision-sequence learning, target-aligned sequence fusion, decision transformer, offline reinforcement learning, dynamics shift.
\end{IEEEkeywords}

\section{Introduction}\label{sec:introduction}

\IEEEPARstart{L}{earning} from fused source and target sequences under distribution shift requires more than enlarging the dataset. Additional source trajectories can improve coverage, but they can also corrupt the local semantics that a target-domain sequence learner relies on. This broader issue is especially visible in dynamics-shifted offline control with models such as the Decision Transformer (DT), which condition future actions on patterns observed in logged trajectories \citep{Levine2020OfflineRL,chen2021decision,janner2021trajectory}. When source and target data differ through the transition dynamics, imported source fragments may no longer remain locally compatible with the target environment: state structures drift across domains, return conditioning becomes brittle under dynamics-induced return-distribution and effective-horizon changes, and actions feasible under source dynamics can become implausible at stitch junctions. Naively fused sequences can therefore improve nominal coverage while degrading target-side reliability.

We study this problem through target-domain decision-sequence learning with Decision Transformers in a dynamics-shifted offline control setting. Existing methods filter, align, reweight, or transport source data through support-aware selection, representation matching, stationary-distribution regularization, or optimal-transport (OT) alignment \citep{liu2024beyond, wen2024contrastive, xue2023state, lyu2025cross}. These strategies are useful, but they are usually introduced as preprocessing or regularization heuristics. What remains missing is a target-side criterion that determines \emph{which} imported source sequences should be retained and \emph{how} retained transitions should be weighted so that downstream learning remains reliable on the target domain. In sequence settings, this question is sharper than generic distribution matching: imported subsequences must be structurally stitchable to target trajectories, while retained local transitions must remain feasible under target-side dynamics.

\begin{figure*}
\centering
\includegraphics[width=0.8\linewidth]{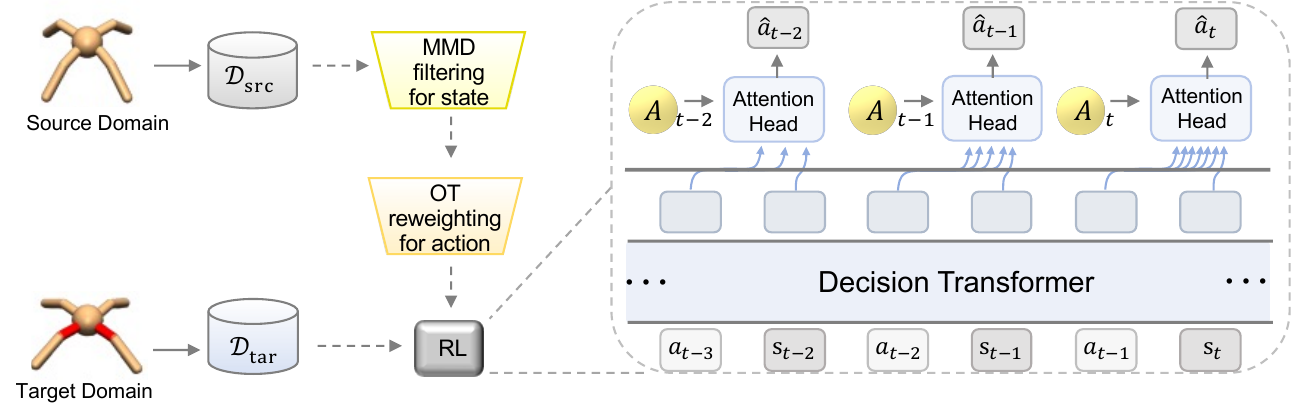}
\caption{
An overview of TAF and its Decision-Transformer instantiation TAF-DT as a target-aligned sequence-fusion pipeline for dynamics-shifted offline control. Source fragments are first screened by an MMD-based state-structure gate and then reweighted by an OT-based action-feasibility score so that the critic learned on the fused data remains close to Bellman-consistent on the target domain. The resulting feasibility-weighted trajectories are then consumed through advantage-conditioned tokens and weighted $Q$-regularized Transformer training to produce stable actions $\hat a_t$ under target-domain dynamics.
}
\label{fig:training}
\end{figure*}

We address this gap through a \emph{target-aligned sequence-fusion} view. Rather than treating source selection, weighting, and downstream sequence learning as separate design choices, we start from a single criterion: imported source trajectories should be admitted only insofar as the fused distribution preserves target-side Bellman reliability. This yields a \emph{target Bellman-risk matching} formulation and, in turn, a simple \emph{gate--then--weight} principle: source fragments should first satisfy target-side state-structure compatibility, and retained transitions should then be reweighted according to target-side action and transition feasibility. We refer to the resulting framework as Target-Aligned Fusion (TAF), and denote its Decision-Transformer-based instantiation by TAF-DT. In TAF-DT, the fused distribution is used consistently for critic fitting, advantage-conditioned tokenization, and weighted $Q$-regularized Transformer training. The formal guarantee is deliberately kept at the Bellman-risk level: the stitchability radii justify the fusion rule and its tractable surrogate, while the downstream return gains are evaluated empirically under controlled dynamics shifts.

Our contributions are threefold:
\begin{itemize}
    \item \textbf{Problem formulation.} Motivated by distribution-shifted decision-sequence learning, we formulate source--target fusion for dynamics-shifted offline control with Decision Transformers as a \emph{target Bellman-risk matching} problem. This turns source-data fusion from heuristic preprocessing into a target-side critic-reliability criterion.
    \item \textbf{Variational principle and theory.} We prove that the measurable stitchability radii control target Bellman risk and induce a tractable variational \emph{gate--then--weight} surrogate. This provides a Bellman-risk-level justification for MMD-based fragment gating and OT-based feasibility weighting without requiring a strong global performance-guarantee assumption.
    \item \textbf{Concrete instantiation and evidence.} We instantiate this principle in TAF-DT by first constructing a target-aligned fused distribution through MMD-based fragment gating and OT-based feasibility weighting, and then carrying that same fused distribution into advantage-conditioned tokenization and weighted $Q$-regularized Transformer training. Empirically, TAF-DT achieves strong aggregate returns and stable stitch-junction sequence semantics across gravity, kinematic, and morphology shifts.
\end{itemize}

The remainder of the paper reviews related work, introduces the offline-control setup and notation, develops the target-risk formulation and variational surrogate, presents the concrete TAF-DT instantiation, and then reports experiments, limitations, and conclusions.

\section{Related Work}\label{sec:related}
TAF connects three neighboring areas: offline reinforcement learning, source--target data fusion under structural mismatch, and sequence modelling for decision making. These areas address different parts of the setting considered here. Offline RL studies reliable learning from fixed logged data; source--target fusion studies how shifted data can be aligned or selected; and decision-sequence models study how trajectories can be represented and optimized by sequence learners. TAF differs by making the \emph{fused training law} the primary object: source data are not merely regularized at the policy level or aligned as a preprocessing step, but are selected and weighted according to a target Bellman-risk matching principle that is then used consistently for critic fitting, sequence construction, and Transformer training.

\textbf{Offline reinforcement learning.}
Offline RL aims to learn effective policies from previously collected data without further environment interaction \citep{Levine2020OfflineRL}. A central difficulty is distributional shift between the learned policy and the behavior policy that generated the data, which can cause value overestimation and unreliable extrapolation. Representative approaches address this issue through behavior constraints, as in BCQ and BEAR \citep{Fujimoto2019BCQ,Kumar2019BEAR}, pessimistic or conservative value learning, as in CQL and IQL \citep{Kumar2020CQL,kostrikov2022iql}, model-based pessimism such as MOReL \citep{kidambi2020morel}, or behavior-regularized policy learning such as TD3$+$BC \citep{Fujimoto2021TD3BC}. These methods generally assume that the offline dataset has already been specified and focus on making value or policy learning robust on that dataset. TAF addresses a complementary earlier question: when target data are scarce and shifted source trajectories are available, how should the training distribution itself be formed before offline learning takes place?

\textbf{Source--target data fusion under dynamics and structural shift.}
A second line of work studies how source data can be used in a target domain when the two domains differ in dynamics, morphology, sensing, or other structural factors. Prior methods use representation alignment, MMD-style distribution matching, optimal transport, source selection, or domain-adaptive generation to make imported data more compatible with the target domain \citep{liu2024beyond,wen2024contrastive,Lyu2024ODRL,lyu2025cross,Li2024DiffStitch,le2025dmc}. TAF shares the goal of exploiting shifted source data, but differs in the criterion used to decide what should be imported. Rather than treating alignment or reweighting as standalone preprocessing, TAF chooses the fused law through a target-domain Bellman-risk view. This yields a gate--then--weight structure in which fragment-level state-structure compatibility and transition-level target feasibility play distinct roles.

\textbf{Sequence modelling for decision making.}
Decision Transformers and related trajectory models cast offline control as conditional sequence modelling over returns, states, and actions \citep{chen2021decision,janner2021trajectory}. Subsequent work improves sequence-based control by incorporating critic information, value-aware conditioning, or advantage-aware tokens to mitigate the brittleness of pure return-to-go prompting \citep{gao2024act,Wang2023CGDT,hu2024q}. RADT further studies off-dynamics adaptation mainly through return-space adjustment \citep{wang2024return}. TAF-DT is complementary to these model-level advances. It couples upstream transition-space fusion with downstream sequence-level stabilization: the Bellman-risk-matched fusion distribution is used for critic fitting, advantage-conditioned tokenization, and weighted $Q$-regularized Transformer training.

\section{Problem Setup and Preliminaries}\label{sec:preliminaries}

We now make the target-aligned sequence-fusion question concrete in a dynamics-shifted offline control setting and introduce the notation used throughout the paper. Our focus is adaptation from a rich source dataset collected under dynamics $P_S$ to a target environment with different dynamics $P_T$, where only limited target data are available.

\subsection{Terminology}
In this paper, \emph{stitchability} means that imported source fragments can be used in target-domain sequence learning without creating target-inconsistent local transitions. A \emph{stitch junction} is a boundary created by concatenating two fragments or by crossing between a retained source fragment and target-domain context in either direction. \emph{State-structure compatibility} refers to latent state-pattern alignment between a source fragment and target data, later measured by MMD. \emph{Action feasibility} refers to local action--transition compatibility with target dynamics, later measured by OT-based transition costs.

\subsection{MDP and shift setting.}
We consider two infinite-horizon Markov Decision Processes (MDPs), the source domain $\mathcal{M}_S := (\mathcal{S},\mathcal{A},P_S,r,\gamma,\rho_0)$ and the \emph{target} domain $\mathcal{M}_{T} := (\mathcal{S},\mathcal{A},P_{T},r,\gamma,\rho_0)$. The two domains share $\mathcal{S}$, $\mathcal{A}$, the bounded reward $r:\mathcal{S}\times\mathcal{A}\to\mathbb{R}$, discount factor $\gamma\in[0,1)$, and initial distribution $\rho_0$, but differ in their transition kernels $P_{S}\neq P_{T}$. Throughout the paper, plain $P_S,P_T$ denote MDP transition kernels, whereas blackboard-bold symbols such as $\mathbb P_S,\mathbb P_T$, and $\mathbb P_{\mathrm{mix}}^{I,w}$ denote data or training distributions. For any MDP $\mathcal{M}$ and policy $\pi$, let $d_{\mathcal{M}}^{\pi}(s) := (1-\gamma)\sum_{t=0}^{\infty}\gamma^t P_{\mathcal{M}}^{\pi}(s\!\mid\!t)$ denote the normalized discounted state occupancy, and define $J_{\mathcal{M}}(\pi):=\mathbb{E}_{(s,a)\sim\nu_{\mathcal{M}}^{\pi}}[r(s,a)]$.

\textbf{Offline datasets and DT-specific difficulty.}
Let $\mathcal{D}_{\mathrm{src}}$ be an offline dataset from $\mathcal{M}_{S}$ and $\mathcal{D}_{\mathrm{tar}}$ a much smaller dataset from $\mathcal{M}_{T}$. We aim to learn a policy $\pi^\star$ that maximizes $J_T(\pi)$ without online interaction with $\mathcal{M}_T$. The core challenge is the dynamics shift $P_S\neq P_T$, which is especially harmful to DT-style policies that rely on token-level continuity in RTG, state, and action: source and target state manifolds misalign, RTG becomes unstable under dynamics-induced return-distribution and effective-horizon changes, and actions feasible under $P_S$ can be implausible under $P_T$. Using only $\mathcal{D}_{\mathrm{src}}$ extrapolates invalid next tokens under $P_T$, while $\mathcal{D}_{\mathrm{tar}}$ alone lacks coverage.

\subsection{Expectile regression and advantage estimation.}
For a response $Y\in\mathbb{R}$ and covariates $X\in\mathcal{X}$, the $\zeta$-expectile regression function ($\zeta\in(0,1)$) is the map $m_\zeta:\mathcal{X}\to\mathbb{R}$ that minimizes $\mathbb{E}[\rho_\zeta(Y-m_\zeta(X))]$, where $\rho_\zeta(u)=|\zeta-\mathbf{1}\{u<0\}|u^2$. It is unique under mild integrability by strict convexity; $\zeta=\tfrac{1}{2}$ recovers the conditional mean, while larger $\zeta$ emphasizes the upper tail.

\subsection{Maximum mean discrepancy.}\label{sec:latent-mmd}
Let $f_\phi:\mathcal S\to\mathcal Z$ be the encoder that maps each state $s\in\mathcal S$ to its latent representation $z=f_\phi(s)\in\mathcal Z$, and let $k:\mathcal Z\times\mathcal Z\to\mathbb R$ be a bounded positive-definite kernel with RKHS $\mathcal H$. For two probability measures $\mu$ and $\nu$ on $\mathcal Z$, which can be viewed as the latent distributions induced by applying $f_\phi$ to the corresponding state distributions, the maximum mean discrepancy induced by $k$ is defined as
\begin{equation}\label{eq:mmd_def}
\mathrm{MMD}_k(\mu,\nu)
:=
\sup_{\|h\|_{\mathcal H}\le 1}
\big|\mathbb E_{z\sim\mu}[h(z)]-\mathbb E_{z\sim\nu}[h(z)]\big|,
\end{equation}
where $z$ denotes a latent representation and the supremum is taken over all RKHS functions $h\in\mathcal H$ with unit norm. This quantity provides the latent-space discrepancy measure used later to define the structural radius $\Delta_m$.

\section{Target-Risk Formulation for Target-Aligned Sequence Fusion}
\label{sec:method}
This section develops TAF from the viewpoint of target-domain Bellman risk under source--target fusion. Building on the preceding discussion, we do not begin by prescribing a particular filtering, reweighting, or sequence-learning mechanism. Instead, we ask a more basic question: which fused training distribution induces a critic that remains most Bellman-consistent with target-domain dynamics? This perspective first identifies the ideal fusion rule at the population level, then yields a variational gate--then--weight principle, and finally leads to its concrete TAF-DT instantiation. The sequence-construction and optimization details are deferred to Section~\ref{sec:technical}.

\subsection{Bilevel fusion objective and target Bellman risk}
\label{sec:risk-matching}
Given scarce target data and abundant but shifted source data, the issue is not merely whether source data should be used, but how it should be fused so that learning on the resulting distribution remains reliable for the target domain. We therefore cast source--target fusion as a \emph{target Bellman-risk matching} problem: among admissible fusion laws, we seek the one whose induced critic best preserves target-domain Bellman consistency. The gate--then--weight structure of TAF will then emerge as a tractable variational surrogate of this population objective.

\noindent\textbf{Problem (Two-level source-data selection and weighting for target-aligned offline learning).}
Let $\mathbb P_T$ and $\mathbb P_S$ denote the target and source transition distributions, respectively. Let $u=(s,a,r,s')$ denote a transition and $\tau=(s_1,a_1,r_1,\dots,s_n,a_n,r_n)$ a trajectory fragment ($n\ge 1$). We seek a state fragment-level gate $I(\cdot)\in\{0,1\}$ and a transition-level weight $w(\cdot)\ge 0$ such that the induced weighted source distribution
$
\mathbb P_S^{I,w}(u)\;\propto\;I(\tau^S)\,w(u)\,\mathbb P_S(u)
$
combines with $\mathbb P_T$ to form the fusion distribution
\[
\mathbb P_{\mathrm{mix}}^{I,w}=(1-\beta)\,\mathbb P_T+\beta\,\mathbb P_S^{I,w},\qquad \beta\in[0,1].
\]
Here the gate removes source fragments whose state structure is poorly matched to the target domain, i.e., fragments that are not readily stitchable to target trajectories. The weight function then modulates retained source transitions according to their action-level feasibility. This family covers pure target training ($\beta=0$), filtering-only fusion, and fusion with both filtering and reweighting. Throughout Section~\ref{sec:risk-matching} and the generic appendix proofs, we use the abstract notation $(I,w)$ and the induced laws $\mathbb P_S^{I,w}$ and $\mathbb P_{\mathrm{mix}}^{I,w}$. When instantiated by TAF, we set $I=I_m$ and write $\mathbb P_S^{I_m,w}$ and $\mathbb P_{\mathrm{mix}}^{I_m,w}$.

To expose our target Bellman-risk matching objective, we first introduce the target-domain Bellman residual of the induced critic. Specifically,
\begin{align}
\delta_T^{V}(s)
&:=
\mathbb E_{\mathbb P_T}\!\big[r(s,a)+\gamma V^{I,w}(s')-V^{I,w}(s)\,\big|\,s\big],
\label{eq:def-deltaV-risk}
\end{align}
Here, \(V^{I,w}\) denotes the critic induced by the fusion distribution determined by the fragment-level gate \(I\) and transition-level weight \(w\); thus \(\delta_T^{V}(s)\) measures the conditional target-domain Bellman residual of this induced critic at state \(s\).

Let $\mu_T^{s}:=\operatorname{Marg}_{s}(\mathbb P_T)$ denote the state marginal of the target data law. With $\tau^S\in\mathcal D_{\rm src}$ denoting a source trajectory fragment and $u\in\tau^S$ a transition within that fragment, we define the following bilevel objective 
\begin{subequations}
\label{eq:ideal-bellman-objective}
\begin{align}
V^{I,w}
&\in
\arg\min_{V}\,\mathcal L_{\mathrm{critic}}\!\left(V;\mathbb P_{\mathrm{mix}}^{I,w}\right),
\label{eq:ideal-bellman-objective-inner}
\\
(I^{\star},w^{\star})
&\in
\arg\min_{I,w}\,
\mathfrak{R}_{\mathrm{Bell}}\!\left(V^{I,w};\mathbb P_T\right),
\label{eq:ideal-bellman-objective-outer}
\end{align}
\end{subequations}
where $\mathcal L_{\mathrm{critic}}(V;\mathbb P_{\mathrm{mix}}^{I,w})$ denotes the critic-learning objective under the fusion distribution, and the target Bellman risk $\mathfrak{R}_{\mathrm{Bell}}$ of the induced critic is
\begin{align}
\mathfrak{R}_{\mathrm{Bell}}\!\left(V^{I,w};\mathbb P_T\right)
:=\|\delta_T^V\|_{1,\mu_T^{s}}
=\mathbb E_{s\sim\mu_T^s}\!\left[\left|\delta_T^V(s)\right|\right].
\label{eq:bellman-risk-def}
\end{align}
For brevity, we use the reduced notation
\[
\mathfrak{R}_{\mathrm{Bell}}\!\big({I,w}\big)
\;:=\;
\mathfrak{R}_{\mathrm{Bell}}\!\left(V^{I,w};\mathbb P_T\right),
\]
which emphasizes that the outer objective depends on $(I,w)$ only through the critic induced by training on $\mathbb P_{\mathrm{mix}}^{I,w}$. Under this view, the outer fusion rule of Equation~\eqref{eq:ideal-bellman-objective} determines which source information is retained and how strongly it contributes, while the inner learner fits value functions on the induced fusion distribution. Equation~\eqref{eq:ideal-bellman-objective} therefore identifies the ideal fusion rule as the one whose induced critic minimizes target-domain Bellman inconsistency.

\subsection{From stitchability radii and approximation errors to a target Bellman-risk matching bound}

Direct optimization of Eq.~\eqref{eq:ideal-bellman-objective} is intractable because the outer problem depends on the critic produced by the inner learning dynamics. Instead of solving that bilevel program directly, we upper-bound the target Bellman risk by quantities that depend only on the fused data distribution and mild regularity properties of the learned critic. To obtain this bound, we first introduce two data-side discrepancy measures and the critic-side error quantities used in the bound. 

\subsubsection{Stitchability radii and critic-side errors}

The two data-side discrepancy measures are the latent-space MMD radius and the $1$-Wasserstein transport radius. The first captures the worst residual state-structure mismatch among retained source fragments, and the second captures the remaining action--transition mismatch between the weighted source law and the target law. These are exactly the two mismatches that TAF controls through fragment gating and within-support reweighting.
\begin{definition}[Stitchability radii]\label{def:radii}
For a gate--weight pair $(I,w)$, define the MMD-based state-structure radius
\[
\Delta_m:=\sup_{\tau^S:I(\tau^S)=1} d^m(\tau^S),
\]
where $d^m(\tau^S)= \mathbb E_{\tau^T}\mathrm{MMD}_k(\tau^S,\tau^T)$ is a fragment-level state-structure discrepancy, computed in TAF by the latent-space MMD score in Eq.~\eqref{eq:mmd_def}. Define also the $1$-Wasserstein transport radius
\[
\Delta_w:=W_1\!\big(\mathbb P_S^{I,w},\,\mathbb P_T\big),
\]
where $W_1$ denotes the $1$-Wasserstein (Earth Mover's) distance~\cite{villani2003topics,villani2008optimal} on the transition space under the ground cost $C(\cdot,\cdot)$. For readability, we suppress the explicit dependence of $(\Delta_m,\Delta_w)$ on $(I,w)$ when no ambiguity arises, and write $\Delta_m(I)$ or $\Delta_w(I,w)$ only when that dependence needs to be emphasized.
\end{definition}

These two radii are not introduced as separate heuristics. They arise naturally in the proof of the transfer bound: the fragment gate leaves a worst-case residual structural discrepancy among retained fragments, captured by the MMD radius $\Delta_m$, while the weighting step leaves a residual source-to-target transport discrepancy, captured by the Wasserstein radius $\Delta_w$. Together they are the two data-side quantities through which Bellman-risk control depends on source--target stitchability.

The two stitchability radii alone do not determine target Bellman risk; we also need to measure how accurately the value critic induced by the fused distribution fits its own training law. We therefore define the mixed-domain value Bellman residual
\begin{align*}
\delta_{\mathrm{mix}}^{V}(s)
:=
\mathbb E_{\mathbb P_{\mathrm{mix}}^{I,w}}\!\big[
 r(s,a)+\gamma V^{I,w}(s')-V^{I,w}(s)\,\big|\,s\big],
\end{align*}
and denote its worst-case magnitude by
\begin{align*}
\varepsilon_V
:=\sup_{s\in\mathcal S}\big|\delta_{\mathrm{mix}}^{V}(s)\big|.
\end{align*}
This mixed-domain approximation error measures the value-residual fitting error of the critic learned from $\mathbb P_{\mathrm{mix}}^{I,w}$ before any target-side transfer gap is accounted for.

\subsubsection{Target Bellman-risk matching bound under stitchability radii}

With these radii and error terms in place, the intractable outer objective can be replaced by a tractable target-risk bound that depends on the structural radius $\Delta_m(I)$, the transport radius $\Delta_w(I,w)$, and the critic-side errors summarized by $\varepsilon_V$ and $\varepsilon_H$. However, to lift these data-side discrepancies to Bellman transfer, we assume that the learned value is approximately constant within encoder fibres and that the shared reward is bounded and transport-regular.
\begin{assumption}[Approximate fiber-constancy of $V$]\label{ass:fiber-constancy}
Let $f_\phi$ map inputs $s$ (either full states in a fully observed MDP, or observation histories in a partially observed setting) from $\mathcal S$ to latent codes in $\mathcal Z$. Then the value varies little within each encoder fiber: there exists $\varepsilon_H\ge 0$ such that
\[
\sup\big\{|V(s)-V(\tilde s)|: s,\tilde s\in\mathcal S,\ f_\phi(s)=f_\phi(\tilde s)\big\}
\le
\varepsilon_H.
\]
\end{assumption}

\begin{assumption}[Bounded and transport-regular reward]\label{ass:bounded}
The shared reward is bounded as $|r|\le R_{\max}$. When viewed as a function of the transition tuple $u$, $r(s,a)$ is $L_r$-Lipschitz under the transition metric $\rho$.
\end{assumption}

The next theorem makes the reduction precise. In particular, the two radii control the target-domain Bellman error of the value critic learned from the fused distribution.
\begin{theorem}[Target Bellman-risk matching bound under stitchability radii]\label{thm:bellman-risk}
Under Assumptions~\ref{ass:fiber-constancy} and \ref{ass:bounded}, together with the latent MMD setup in Section~\ref{sec:latent-mmd}, let $V$ be learned under $\mathbb P_{\mathrm{mix}}^{I,w}$ with the fixed mixture coefficient $\beta\in[0,1]$. Then there exist constants $R_1,R_2>0$, depending only on $R_{\max}$, $L_r$, and encoder/kernel bounds, such that
\begin{align}
\mathfrak{R}_{\mathrm{Bell}}({I,w})
\le
\varepsilon_V+\beta\,(R_1\,\Delta_m+R_2\,\Delta_w)+4\beta\,\varepsilon_H,
\label{eq:target-risk-V}
\end{align}
\end{theorem}

\noindent\textbf{Interpretation.}
The bound separates the target Bellman risk into a transfer term, $\beta(R_1\Delta_m+R_2\Delta_w)$, and an approximation term, $\varepsilon_V+4\beta\varepsilon_H$. Consequently, controlling only one radius of $\Delta_m$ or $\Delta_w$ is insufficient: filtering without reweighting can still leave action-feasibility mismatch, while reweighting without structural filtering can assign mass to fragments that are already misaligned in state structure. Theorem~\ref{thm:bellman-risk} therefore provides the theoretical basis for TAF's coupled gate--then--weight design. The complete proof is given in Appendix~\ref{sec:proof_of_target_risk}.


\subsection{Variational surrogate and sequential gate--then--weight optimizer}
\label{sec:vari-sur}
Theorem~\ref{thm:bellman-risk} identifies the data-dependent quantities that must be controlled for target Bellman-risk matching. The terms $\varepsilon_V$ and $4\beta\varepsilon_H$ reflect critic approximation and representation errors, and are not directly optimized by the data-side gate--weight stage. In contrast, $\Delta_m(I)$ and $\Delta_w(I,w)$ depend on the retained source support and its weights. We therefore use the bound to define a tractable data-side surrogate: first control the fragment-level structural radius, and then assign weights according to target-side transport feasibility on the retained support.

\textbf{From the transport radius to samplewise feasibility costs.}
Setting $\lambda_m:=\beta R_1$ and $\lambda_w:=\beta R_2$, Theorem~\ref{thm:bellman-risk} gives, for every admissible gate--weight pair,
\begin{equation}
\label{eq:bellman-upper-prevar}
\mathfrak{R}_{\rm Bell}(I,w)
\le
\varepsilon_V + \lambda_m\,\Delta_m(I) + \lambda_w\,\Delta_w(I,w) + 4\beta\,\varepsilon_H .
\end{equation}
The remaining step is to convert the distribution-level radius $\Delta_w(I,w)=W_1(\mathbb P_S^{I,w},\mathbb P_T)$ into transition-level feasibility costs for optimizing $w$. Let the retained weighted source law be
\[
\mathbb P_S^{I,w}=\sum_{i\in\mathcal G(I)} w_i\delta_{u_i},
\]
where $\mathcal G(I)$ denotes the retained transition support. Any feasible coupling $\Pi\in\Gamma(\mathbb P_S^{I,w},\mathbb P_T)$ admits a row-wise disintegration
\[
\Pi({\rm d}u,{\rm d}v)
=
\sum_{i\in\mathcal G(I)}
w_i\,\delta_{u_i}({\rm d}u)\,\kappa_i({\rm d}v),
\quad
\sum_{i\in\mathcal G(I)} w_i\kappa_i=\mathbb P_T,
\]
where $\kappa_i$ is the target-side conditional law paired with source transition $u_i$. Therefore the transport cost decomposes into row costs
\begin{equation}
\label{eq:samplewise-cost-def}
c_i^{\kappa}
:=
\int C(u_i,v)\,{\rm d}\kappa_i(v),
\quad
\int C\,{\rm d}\Pi
=
\sum_{i\in\mathcal G(I)} w_i c_i^{\kappa}.
\end{equation}
Since $W_1$ is the minimum over feasible couplings, any such row decomposition gives the samplewise majorization
\begin{equation}
\label{eq:samplewise-majorizer}
\Delta_w(I,w)
=
W_1(\mathbb P_S^{I,w},\mathbb P_T)
\le
\sum_{i\in\mathcal G(I)} w_i c_i^{\kappa}.
\end{equation}
This majorization is the desired bridge: after the MMD gate, TAF fixes a reference target-side disintegration on the retained support and uses its induced row costs as stable per-transition feasibility scores. In practice, a product coupling yields a conservative choice, while the empirical implementation below uses a sharper reference OT-row construction. Denoting the resulting fixed cost by $c_i$, and adding an entropy regularizer relative to the reference law $u_I$, we obtain the data-side variational surrogate
\begin{equation}
\label{eq:variational-surrogate}
\mathcal J_{\tau_{\mathrm{ent}}}(I,w)
=
\lambda_m\,\Delta_m(I)
+\lambda_w\!\sum_{i\in\mathcal G(I)} w_i c_i
+\tau_{\mathrm{ent}}\,\mathrm{KL}(w\|u_I),
\end{equation}
where $\tau_{\mathrm{ent}}>0$ and $w\in\Delta(\mathcal G(I)):=\{w_i\ge 0,\ \sum_{i\in\mathcal G(I)} w_i=1\}$. The KL term is not an additional consequence of the Bellman bound itself; it is introduced at the surrogate level to prevent the linear transport objective from collapsing onto a few lowest-cost transitions. Thus the surrogate preserves the risk-matching interpretation of the two radii while converting the within-support optimization into a stable soft-selection problem.

\textbf{Structural gate.}
Under a fixed retention budget, the first stage of Equation~\eqref{eq:variational-surrogate} is to retain the fragments that minimize the structural radius $\Delta_m(I)$. This yields the following threshold rule.
\begin{proposition}[Optimal threshold gate under a retention budget]
\label{prop:optimal-gate}
Let $d_i^m:=d^m(\tau_i^S)$ be the state-structure discrepancy of source fragment $i$, and fix a retention budget $n_{\mathrm{keep}}\in\{1,\dots,N\}$. Consider the structural gate problem
\begin{equation}
\label{eq:gate-subproblem}
\min_{I\in\{0,1\}^N}\ \Delta_m(I)
\qquad
\text{s.t.}\quad
\sum_{i=1}^N I_i = n_{\mathrm{keep}}.
\end{equation}
Then any optimizer of Eq.~\eqref{eq:gate-subproblem} retains exactly the $n_{\mathrm{keep}}$ fragments with the smallest values of $d_i^m$. Equivalently, there exists a threshold $q_{n_{\mathrm{keep}}}$ given by the $n_{\mathrm{keep}}$-th order statistic of $\{d_i^m\}_{i=1}^N$ such that
\[
I_i^{\star}=\mathbf 1\!\big(d_i^m\le q_{n_{\mathrm{keep}}}\big),
\]
up to arbitrary tie-breaking at the threshold.
\end{proposition}
\noindent The full proof is deferred to Appendix~\ref{sec:proof_variational}.

\textbf{OT-based soft reweighting.}
Once the support has been fixed by the structural gate, the second stage solves an entropy-regularized transport-feasibility problem on that support. This produces a Gibbs reweighting that favors target-compatible transitions while retaining support diversity.
\begin{proposition}[Optimal entropic reweighting on the gated support]
\label{prop:optimal-weight}
Fix a gate $I$ and let $\mathcal G(I)$ be its retained support. Consider the within-support transport-alignment problem
\begin{equation}
\label{eq:weight-subproblem}
\min_{w\in\Delta(\mathcal G(I))}
\Big\{
\lambda_w\!\sum_{i\in\mathcal G(I)} w_i c_i
+ \tau_{\mathrm{ent}}\,\mathrm{KL}(w\|u_I)
\Big\}.
\end{equation}
Then Eq.~\eqref{eq:weight-subproblem} has the unique optimizer
\begin{equation}
\label{eq:gibbs-weights}
w_i^{\star}
=
\frac{u_I(i)\,\exp(-\lambda_w c_i/\tau_{\mathrm{ent}})}
{\sum_{j\in\mathcal G(I)} u_I(j)\,\exp(-\lambda_w c_j/\tau_{\mathrm{ent}})}.
\end{equation}
Equivalently, if $d_i^w:=-c_i$ and $\eta_w:=\lambda_w/\tau_{\mathrm{ent}}$, then $w_i^{\star}\propto u_I(i)\exp(\eta_w d_i^w)$, which is the exponential OT-feasibility weighting used by TAF.
\end{proposition}
\noindent The detailed derivation is given in Appendix~\ref{sec:proof_variational}.

\textbf{Sequential gate--then--weight characterization.}
The two propositions combine into a sequential optimizer for the Bellman-risk surrogate: first minimize the structural radius under the retention budget, and then solve the regularized transport-feasibility problem on the retained support.
\begin{theorem}[Variational characterization of TAF gate--then--weight fusion]
\label{thm:variational-dfdt}
Fix a retention budget $n_{\mathrm{keep}}$, let $\mathcal I_{n}:=\{I\in\{0,1\}^N:\sum_i I_i=n_{\mathrm{keep}}\}$, and define the sequential surrogate program
\begin{equation}
\label{eq:sequential-surrogate-program}
\begin{aligned}
I^{\star}
&\in
\operatorname{argmin}_{I\in\mathcal I_n}
\Delta_m(I),
\\
w^{\star}
&\in
\operatorname{argmin}_{w\in\Delta(\mathcal G(I^{\star}))}
\Big\{
\lambda_w\!\sum_{i\in\mathcal G(I^{\star})} w_i c_i
+\tau_{\mathrm{ent}}\,\mathrm{KL}(w\|u_{I^{\star}})
\Big\}.
\end{aligned}
\end{equation}
Then:
\begin{enumerate}[leftmargin=1.5em]
    \item $I^{\star}$ is the threshold MMD gate of Proposition~\ref{prop:optimal-gate}, i.e., the top-$n_{\mathrm{keep}}$ fragments with smallest state-structure discrepancy.
    \item $w^{\star}$ is the Gibbs OT-feasibility reweighting of Proposition~\ref{prop:optimal-weight} on the retained support.
    \item The induced pair $(I^{\star},w^{\star})$ satisfies
    \begin{equation}
    \label{eq:variational-risk-bound}
    \begin{aligned}
    \mathfrak{R}_{\mathrm{Bell}}\!\big(I^{\star},w^{\star}\big)
    &\le \varepsilon_V + \lambda_m\,\Delta_m(I^{\star}) \\
    &\quad + \lambda_w\!\sum_{i\in\mathcal G(I^{\star})} w_i^{\star} c_i
          + 4\beta\,\varepsilon_H .
    \end{aligned}
    \end{equation}
\end{enumerate}
\end{theorem}
Full proofs are provided in Appendix~\ref{sec:proof_variational}. We next give the implementation-level computations for the gate and the within-support weights.

\subsection{Concrete computation of the TAF gate and weights}
\label{sec:tech-gateweight}
This subsection gives the concrete computations used by TAF: latent-space MMD scores for ranking source fragments and empirical OT costs for assigning Gibbs weights on the retained support.

\subsubsection{MMD-based fragment selection}
\label{sec:mmd-select}
TAF ranks each source fragment by a latent-space MMD score relative to target fragments. Let $\{\tau_i^S\}_{i=1}^N$ denote the source fragments, with
\[
\tau_i^S=(s_{i,1}^S,a_{i,1}^S,r_{i,1}^S,\dots,s_{i,\ell_i}^S,a_{i,\ell_i}^S,r_{i,\ell_i}^S),
\]
and let $f_\phi$ be the shared encoder. Writing $z_{i,t}^S=f_\phi(s_{i,t}^S)$ and, for a target fragment $\tau^T$ of length $\ell$, $z_t^T=f_\phi(s_t^T)$, the fragment-level latent MMD is
\begin{equation}\label{eq:mmd_dist}
\begin{aligned}
\mathrm{MMD}_k^2(\tau_i^S,\tau^T)
= &\frac{1}{\ell_i^2}\sum_{p,q=1}^{\ell_i} k(z_{i,p}^S,z_{i,q}^S)
+ \frac{1}{\ell^2}\sum_{p,q=1}^{\ell} k(z_p^T,z_q^T) \\
&- \frac{2}{\ell_i\ell}\sum_{p=1}^{\ell_i}\sum_{q=1}^{\ell} k(z_{i,p}^S,z_q^T),
\end{aligned}
\end{equation}
with $k$ an RBF kernel on the latent space. TAF then scores each source fragment by
\begin{equation}
\label{eq:mmd_cost}
d_i^m
:= d^m(\tau_i^S)
=
\mathbb E_{\tau^T\sim \mathbb P_T}
\Big[
\mathrm{MMD}_k(\tau_i^S,\tau^T)
\Big].
\end{equation}
Given a retention budget $n_{\mathrm{keep}}$ (or equivalently a keep ratio $\xi$ with $n_{\mathrm{keep}}=\lfloor \xi N\rfloor$), TAF implements the gate $I_m$ by thresholding these scores at the $n_{\mathrm{keep}}$-th order statistic, retaining the fragments with the smallest MMD values. By Proposition~\ref{prop:optimal-gate}, this is exactly the structural-gate optimizer induced by the variational surrogate.

\subsubsection{OT-based action feasibility and weights}
After the MMD gate fixes the retained support, the OT step assigns a soft feasibility weight to each retained transition.  Let $\mathcal G(I_m)=\{u_i\}_{i=1}^{n}$ denote the retained source transitions and let $\{v_j\}_{j=1}^{M}$ denote target transitions, where $u_i=(s_i^S,a_i^S,r_i^S,{s_i^S}^{\prime})$ and $v_j=(s_j^T,a_j^T,r_j^T,{s_j^T}^{\prime})$.  In implementation, both are embedded as the same concatenated transition feature $x(u)=[s,a,r,s']$.  The ground cost used by the OT solver is the cosine cost, following OTDF~\citep{Lyu2025OTDF}:
\begin{equation}
\label{eq:ot-cos-cost}
C(u_i,v_j)
=
1-\frac{\langle x(u_i),x(v_j)\rangle}
{\|x(u_i)\|_2\|x(v_j)\|_2+\epsilon_C}.
\end{equation}

We solve one entropic reference OT problem with ground cost $C(u_i,v_j)$ between the uniform law on the retained support, $u_I(i)=1/n$, and the empirical target law, $q_j=1/M$; solving this problem gives the reference plan $\widehat\gamma^0$. Note that  $\widehat\gamma^0$ is not recomputed for every possible downstream source weight; it is used only to turn each retained source transition into a fixed row-wise feasibility cost.  Specifically, for each row we normalize the transported target mass and average the cosine costs:
\begin{equation}
\label{eq:emp-ot-cost}
\hat\pi_{ij}:=\frac{\widehat\gamma^0_{ij}}{\sum_{k=1}^{M}\widehat\gamma^0_{ik}},
\qquad
\hat c_i:=\sum_{j=1}^{M}\hat\pi_{ij}\,C(u_i,v_j).
\end{equation}
Here $\hat\pi_{ij}\ge0$ and $\sum_j\hat\pi_{ij}=1$, so $\hat c_i$ is the average target-side transport cost seen from source transition $u_i$.  Smaller values indicate better local action--transition compatibility with the target data.

\textbf{Cost calibration.}
Raw OT costs can have different numerical ranges across tasks.  We therefore calibrate them on the retained support using their minimum $a_I$ and range $s_I$:
\begin{equation}
\label{eq:ot-cost-calib-main}
a_I:=\min_{\ell\in\mathcal G(I_m)}\hat c_\ell,
\ \ 
s_I:=\max_{\ell\in\mathcal G(I_m)}\hat c_\ell-a_I,
\ \ 
\widetilde c_i:=\frac{\hat c_i-a_I}{s_I+\epsilon}.
\end{equation}
Thus $a_I$ recenters the best retained transition to zero, while $s_I$ rescales the cost spread before exponentiation.  This calibration preserves the ordering induced by $\hat c_i$ and mainly controls the effective temperature of the Gibbs weights.  We finally set the feasibility score and source weight as
\begin{align}
 d_i^w &:= -\widetilde c_i, \label{eq:ot-dist}\\
 w_i
 &=
 \frac{u_I(i)\exp(\eta_w d_i^w)}
 {\sum_{\ell\in\mathcal G(I_m)}u_I(\ell)\exp(\eta_w d_\ell^w)},
 \qquad i\in\mathcal G(I_m). \label{eq:ot-weight}
\end{align}
When $u_I$ is uniform, it is absorbed into the normalizer.  In this way, OT is used as a soft feasibility weighting rule on the MMD-retained support, rather than as a second hard filtering stage.

After gating and reweighting, TAF forms the empirical weighted source law whose mass on each retained transition $u_i\in\mathcal G(I_m)$ is proportional to $u_I(i)\exp(\eta_w d_i^w)$. Equivalently, in density notation,
\[
\mathbb P_S^{I_m,w}(u)\propto I_m(\tau^S)\,\exp\!\big(\eta_w d^w(u)\big)\,\mathbb P_S(u),
\]
where the uniform reference factor $u_I$ is absorbed into the normalizer in our implementation. The final mixed law is
\begin{equation}
\label{eq:pmix-def}
\mathbb P_{\mathrm{mix}}^{I_m,w}=(1-\beta)\,\mathbb P_T+\beta\,\mathbb P_S^{I_m,w},
\qquad \beta\in[0,1].
\end{equation}
This mixed law is the output of the gate--then--weight stage and is used throughout the remainder of TAF-DT.

\section{Technical Instantiation of TAF-DT}
\label{sec:technical}
The preceding bound is algorithmic rather than merely diagnostic. It identifies the data-side quantities optimized by TAF-DT: \(\Delta_m(I_m)\) induces the MMD fragment gate, and \(\Delta_w(I_m,w)\) induces OT-based feasibility weights. The resulting fused law \(\mathbb P_{\mathrm{mix}}^{I_m,w}\) is then used throughout critic fitting, advantage-token relabeling, and Transformer policy training. Hence, the theory justifies the construction of the training law.

\textbf{Difference from related components.}
OTDF~\cite{Lyu2025OTDF} aligns transition distributions with OT~\citep{Lyu2025OTDF}, but it does not first impose a fragment-level structural gate or propagate a single risk-matched fusion law into sequence-token relabeling. QT and ACT improve DT training through value regularization or advantage conditioning on a fixed dataset~\citep{gao2024act,hu2024q}, whereas TAF-DT first constructs \(\mathbb P_{\mathrm{mix}}^{I_m,w}\) by Bellman-risk-driven fusion and then uses advantage tokens and \(Q\)-regularization as downstream consumers of that law. Thus, the distinction is not the use of MMD, OT, or critic-conditioned tokens alone, but their coupling through one target-aligned training distribution.

\subsection{Sequence-aware trajectory stitching with advantage-conditioned tokenization}
\label{sec:adv-tokens}
Having fixed the fused law $\mathbb P_{\mathrm{mix}}^{I_m,w}$ in Eq.~\eqref{eq:pmix-def}, we next explain how TAF-DT turns samples from this distribution into DT-compatible sequences. Standard return-to-go (RTG) tokens are brittle under cross-domain stitching: dynamics changes can alter return distributions and effective horizons, so RTG can jump even when the fused data are already target-aligned. TAF-DT therefore replaces RTG with advantage-conditioned tokens computed from auxiliary critics trained on the same fused distribution.

For empirical implementation, it is convenient to associate each transition $u$ with a scalar sample weight
\[
\omega(u)=
\begin{cases}
1, & u\sim \mathbb P_T,\\
I_i\exp(\eta_w d_i^w), & u=u_i\in\mathcal G(I_m),
\end{cases}
\]
so that expectations under $\mathbb P_{\mathrm{mix}}^{I_m,w}$ can be realized as weighted empirical averages over target data and gated source data. The coefficient $\beta$ sets the overall target--source mixing proportion, whereas $\omega(u)$ encodes source-side selection and weighting: $I_i$ implements fragment gating and $\exp(\eta_w d_i^w)$ reweights retained source transitions by OT-based feasibility.

\textbf{Auxiliary critics for token relabeling.}
Let $V_\varphi$ and $Q_\psi$ denote auxiliary value functions used only to construct the token labels. They are trained under $\mathbb P_{\mathrm{mix}}^{I_m,w}$ so that the resulting advantages are comparable across target data and retained weighted source data:
\begin{align}
\mathcal{L}_\varphi
&=
\mathbb{E}_{(s,a,r,s')\sim \mathbb P_{\mathrm{mix}}^{I_m,w}}
\Big[
\rho_\zeta\!\big(r+\gamma V_\varphi(s')-V_\varphi(s)\big)
\Big],
\label{V_loss}
\\
\mathcal{L}_\psi
&=
\mathbb{E}_{(s,a,r,s')\sim \mathbb P_{\mathrm{mix}}^{I_m,w}}
\Big[
\rho_{\frac{1}{2}}\!\big(r+\gamma V_\varphi(s')-Q_\psi(s,a)\big)
\Big],
\label{Q_loss}
\end{align}
where $\rho_\zeta$ is the expectile loss from Section~\ref{sec:preliminaries}. The first loss fits a shared value baseline and the second fits an auxiliary action-value function against the same bootstrap target. Because both are trained on the same fused law that appears in Section~\ref{sec:risk-matching}, the induced advantages inherit the same gate--then--weight bias toward target-feasible behavior.

\textbf{Advantage-conditioned tokens.}
For each transition in a stitched sequence, we define
\[
A_t:=Q_\psi(s_t,a_t)-V_\varphi(s_t).
\]
This advantage token replaces the raw RTG signal. For a relabeled window $x_{t:t+K-1}$, TAF-DT therefore uses the sequence
\begin{equation}
\label{eq:adv-token-seq}
\bigl(s_t,a_t,A_t,\ldots,s_{t+K-1},a_{t+K-1},A_{t+K-1}\bigr)
\end{equation}
as the DT context. Compared with RTG, $A_t$ is local, critic-normalized, and substantially less sensitive to dynamics-induced return-distribution and effective-horizon changes, which makes stitched windows more stable near source--target junctions.

\textbf{Command-token replacement at inference.}
During training, the advantage tokens in Eq.~\eqref{eq:adv-token-seq} are observed from data. During inference, they are replaced by a command network $C_\omega$ that predicts an advantage-consistent command token from the current state. This preserves the same conditioning format at test time without requiring future rewards. The training details of $C_\omega$ are given in Section~\ref{sec:alg-details}.

\subsection{Target-aligned transformer training via weighted Q-regularization}
\label{sec:q-reg}
We now train the final critics and Transformer policy on the same fused law $\mathbb P_{\mathrm{mix}}^{I_m,w}$. To avoid confusion with the auxiliary token critics $V_\varphi$ and $Q_\psi$ from Section~\ref{sec:adv-tokens}, we denote the training-time twin critics by $Q_{\phi_1}$ and $Q_{\phi_2}$. Every loss below is written as an expectation under $\mathbb P_{\mathrm{mix}}^{I_m,w}$; equivalently, target samples are used directly and source samples are admitted by the gate $I_i$ and weighted by $\exp(\eta_w d_i^w)$. Algs.~\ref{alg:train} and~\ref{alg:infer} summarize the full training and inference procedures.

\textbf{Twin-critic update.}
Let $x_{t-K+1:t}$ be an advantage-conditioned context window from Eq.~\eqref{eq:adv-token-seq}. The training critics are updated by minimizing
\begin{align}
\mathcal L_{Q_{\phi_j}}
=
\mathbb E_{\mathbb P_{\mathrm{mix}}^{I_m,w}}
\left[
\frac{1}{K-1}\sum_{i=t-K+1}^{t-1}
\bigl(\hat Q_i-Q_{\phi_j}(s_i,a_i)\bigr)^2
\right],
\label{eq:q_loss}
\end{align}
where
\[
\hat Q_i
=
\sum_{\ell=i}^{t-1}\gamma^{\ell-i}r_\ell
+\gamma^{t-i}\min_{j\in\{1,2\}}Q_{\phi'_j}(s_t,\hat a_t),
\]
and
\[ 
\hat a_t\sim \pi_{\theta'}(\cdot\mid x_{t-K+1:t}),
\]
and $Q_{\phi'_j}$ and $\pi_{\theta'}$ are the target critics and target policy. This is the TD fitting step whose Bellman consistency motivated the gate--then--weight fusion in Section~\ref{sec:risk-matching}.

\textbf{Weighted DT loss.}
Using the same relabeled window $x_{t-K+1:t}$, the policy is trained by sequence modeling under the fused law:
\begin{align}
\mathcal L_{\mathrm{DT}}
=
\mathbb E_{\mathbb P_{\mathrm{mix}}^{I_m,w}}
\left[
\frac{1}{K}\sum_{i=t-K+1}^{t}
\|a_i-\pi_\theta(x_{t-K+1:t})_i\|_2^2
\right].
\label{eq:dt_loss}
\end{align}

\textbf{$Q$-regularized policy objective.}
To favor high-value actions while preserving sequence fidelity, TAF-DT augments the DT loss with a critic-guidance term:
\begin{equation}
\label{eq:pi_loss}
\begin{aligned}
\mathcal L_{\pi}
=
&\ \mathcal L_{\mathrm{DT}}(\theta)
\ + \eta_{\rm reg}\,\mathcal L_{\rm reg}(\pi_\theta,\pi_{\rm tar})\\
&-\frac{\alpha}{K}\,
\mathbb E_{\mathbb P_{\mathrm{mix}}^{I_m,w}}
\left[\sum_{i=t-K+1}^{t}
Q_\phi\!\big(s_i,\pi_\theta(x_{t-K+1:t})_i\big)
\right],
\end{aligned}
\end{equation}
where $Q_\phi:=\min_{j\in\{1,2\}}Q_{\phi_j}$, $\pi_{\rm tar}$ is the target-domain behavior policy induced by $\mathcal D_{\rm tar}$, $\mathcal L_{\rm reg}$ is a similarity penalty, and $\alpha,\eta_{\rm reg}\ge 0$ are the critic-guidance and trust-region regularization coefficients. Thus, the same fused distribution $\mathbb P_{\mathrm{mix}}^{I_m,w}$ governs all three stages of TAF-DT: it trains the auxiliary critics used for token relabeling, the twin critics used for TD guidance, and the final Transformer policy.

\section{Experiments}\label{sec:exp}
We evaluate whether the Bellman-risk-driven fusion principle translates into practical gains under gravity, kinematic, and morphology shifts. The evidence is organized in three layers: main results on target-domain return, diagnostics at stitch junctions, and ablations of the key design choices. Detailed task-wise gravity results, efficiency analyses, broader implementation details, and additional hyperparameter ablations are deferred to the supplementary material after the references.

\subsection{Experimental setup}
\textbf{Tasks and datasets.}
We evaluate policy adaptation under three dynamics shifts, \emph{gravity}, \emph{kinematics}, and \emph{morphology}, on four MuJoCo tasks (HalfCheetah, Hopper, Walker2d, Ant) in OpenAI Gym \cite{brockman2016openai}. Gravity scales the magnitude of $g$; kinematics constrains joint ranges; morphology changes link dimensions. We adopt the configurations of \cite{lyu2025cross}. The setting is dynamics-shifted cross-domain offline control: abundant source data but scarce target data from shifted environments. Sources are D4RL “-v2” datasets (medium, medium-replay, medium-expert) \citep{fu2020d4rl}; targets are the D4RL-style datasets of \cite{lyu2025cross} (medium / medium-expert / expert), each with $5{,}000$ transitions. This low-data regime, known to challenge standard offline RL \cite{liu2024beyond,wen2024contrastive,lyu2025cross}, yields 108 tasks across the three shift families and serves as a controlled testbed for distribution-shifted decision-sequence learning.

\textbf{Baselines.} 
We compare TAF-DT with strong offline RL: IQL \citep{kostrikov2022iql} (expectile value regression with advantage-weighted policy), and sequence-modelling baselines for cross-domain adaptation: DT \citep{chen2021decision} (return-to-go sequence model), QT \citep{hu2024q} (value-aware DT), and a DADT variant \citep{kim2022dynamics} with dynamics-aware tokenisation but no filtering. We also include recent cross-domain methods: DARA \citep{liu2022dara}, IGDF \citep{wen2024contrastive}, and OTDF \citep{lyu2025cross}, covering reward reweighting, representation filtering, and OT-based data fusion. For all baselines, we use the authors’ recommended hyperparameters and code, modifying only the dataset and environment identifiers.

\textbf{Evaluation protocol.}
We adopt the cross-domain setup in Sec.~\ref{sec:preliminaries}, using abundant D4RL source logs (\emph{medium}, \emph{medium-replay}, \emph{medium-expert}) and scarce target logs (\emph{medium}, \emph{medium-expert}, \emph{expert}) collected under gravity, kinematic, and morphology shifts. We report normalized target-domain returns (\emph{mean~$\pm$~std.}) over \emph{five} seeds while sweeping all \{\emph{source quality}\}$\times$\{\emph{target quality}\} pairs ($3{\times}3$) across \emph{halfcheetah}, \emph{hopper}, \emph{walker2d}, and \emph{ant}. All methods train offline on the prescribed source and target logs. The normalized-score definition, environment details, and additional implementation settings are given in the supplementary material.

\subsection{Main Results: Returns Under Dynamics Shifts}

We train TAF-DT for 100k gradient updates with five random seeds and report normalized target-domain scores. Detailed results for morphology and kinematic shifts are shown in Tables \ref{tab:morph-shift} and \ref{tab:kinematic-shift}; for gravity shifts, we report a compact summary in Table~\ref{tab:shift-summary} and place the full 36-task table in the supplementary material.

\begin{table*}[ht]
\caption{\textbf{Performance comparison of cross-domain offline RL algorithms under morphology shifts.} $\text{half}\!=\!\text{halfcheetah}$, $\text{hopp}\!=\!\text{hopper}$, $\text{walk}\!=\!\text{walker2d}$, $\text{m}\!=\!\text{medium}$, $\text{r}\!=\!\text{replay}$, $\text{e}\!=\!\text{expert}$. The ‘Target’ column indicates target-domain offline data quality. We report \emph{normalized} target-domain performance (\emph{mean $\pm$ std.}) across source qualities \{\emph{medium, medium-replay, medium-expert}\} and target qualities \{\emph{medium, medium-expert, expert}\}, averaged over \textbf{five} seeds; best per row is highlighted.}
\label{tab:morph-shift}
\centering
\small
\tighttab
\begin{tabular}{
    @{}ll
    | >{$}c<{$} 
    | >{$}c<{$} >{$}c<{$} >{$}c<{$} 
    | >{$}c<{$} >{$}c<{$} >{$}c<{$} 
    | >{$}c<{$} 
    @{}}
\toprule
Source & Target & \text{IQL} & \text{DARA} & \text{IGDF} & \text{OTDF} & \text{DT} & \text{QT} & \text{DADT} & \text{TAF-DT} \\
\midrule
half-m   & medium         & 30.0 & 26.6 & 41.6 & 39.1 & 34.6 & 34.5 & 34.8 & \bestcell{44.2}{0.1} \\
half-m   & medium-expert  & 31.8 & 32.0 & 29.6 & 35.6 & 30.8 & -1.3 & 36.5 & \bestcell{42.5}{1.9} \\
half-m   & expert         & 8.5  & 9.3  & 10.0 & 10.7 & 4.7  & 0.8  & 11.5 & \bestcell{69.0}{7.3} \\
half-m-r & medium         & 30.8 & 35.6 & 28.0 & 40.0 & 30.3 & 31.1 & 30.2 & \bestcell{42.9}{2.0} \\
half-m-r & medium-expert  & 12.9 & 16.9 & 12.0 & 34.4 & 19.4 & 24.6 & 25.7 & \bestcell{42.8}{0.6} \\
half-m-r & expert         & 5.9  & 3.7  & 5.3  & 8.2  & 4.7  & 11.3 & 9.5  & \bestcell{53.0}{18.7} \\
half-m-e & medium         & 41.5 & 40.3 & 40.9 & 41.4 & 34.9 & 22.2 & 36.4 & \bestcell{44.4}{0.1} \\
half-m-e & medium-expert  & 25.8 & 30.6 & 26.2 & 35.1 & 36.5 & 20.7 & 37.1 & \bestcell{43.8}{0.5} \\
half-m-e & expert         & 7.8  & 8.3  & 7.5  & 9.8  & 7.7  & 7.6  &  5.4 & \bestcell{73.7}{7.0} \\
\midrule
hopp-m   & medium         & 13.5 & 13.5 & 13.4 & 11.0 & 12.1 & 10.1 & 11.4 & \bestcell{44.7}{16.5} \\
hopp-m   & medium-expert  & 13.4 & 13.6 & 13.3 & 12.6 & 13.2 & 13.2 & 13.1 & \bestcell{36.0}{20.5} \\
hopp-m   & expert         & 13.5 & 13.6 & 13.9 & 10.7  & 12.9 & 13.1 & 13.5 & \bestcell{56.1}{39.1} \\
hopp-m-r & medium         & 10.8 & 10.2 & 12.0 & 8.7  & 13.3 & 13.1 & 14.4 & \bestcell{53.2}{21.5} \\
hopp-m-r & medium-expert  & 11.6  & 10.4  & 8.2 & 9.7 & 12.4 & 15.6 & 12.2 & \bestcell{79.9}{13.0} \\
hopp-m-r & expert         & 9.8 & 9.0 & 11.4 & 10.7 & 12.7 & \bestplain{15.7} & 13.7 & \bestcell{15.7}{2.7} \\
hopp-m-e & medium         & 12.6 & 13.0 & 12.7 & 7.9 & 11.8 & 9.9  & 11.9  & \bestcell{93.5}{4.7} \\
hopp-m-e & medium-expert  & 14.1 & 13.8 & 13.3 & 9.6  & 11.8 & 12.6 & 10.7 & \bestcell{69.7}{27.3} \\
hopp-m-e & expert         & 13.8 & 12.3 & 12.8 & 5.9  & 12.0 & 12.7 & 11.7 & \bestcell{86.5}{21.4} \\
\midrule
walk-m   & medium         & 23.0 & 23.3 & 27.5 & \bestplain{50.5} & 23.7 & 11.5 & 20.8 & 46.6\!\pm\!9.3 \\
walk-m   & medium-expert  & 21.5 & 22.2 & 20.7 & 44.3 & 22.4 & 29.0 & 25.3  & \bestcell{41.1}{5.3} \\
walk-m   & expert         & 20.3 & 17.3 & 15.8 & 55.3 & 15.6 & 23.8 & 28.3 & \bestcell{70.3}{22.1} \\
walk-m-r & medium         & 11.3 & 10.9 & 13.4 & 37.4 & 12.3 & 30.1 & 28.3 & \bestcell{44.8}{5.0} \\
walk-m-r & medium-expert  & 7.0  & 4.5  & 6.9  & 33.8 & 6.0  & 1.6  & 13.6 & \bestcell{40.6}{20.7} \\
walk-m-r & expert         & 6.3  & 4.5  & 5.5  & 41.5 & 10.1 & 1.1  & 9.5  & \bestcell{86.3}{16.1} \\
walk-m-e & medium         & 24.1 & 31.7 & 27.5 & 49.9 & 17.8 & 19.7 & 27.7 & \bestcell{51.4}{9.2} \\
walk-m-e & medium-expert  & 27.0 & 23.3 & 25.3 & \bestplain{40.5} & 14.3 & 24.2 & 25.2 & 28.4\!\pm\!6.3 \\
walk-m-e & expert         & 22.4 & 25.2 & 24.7 & 45.7 & 10.2 & 21.8 & 26.7 & \bestcell{85.5}{9.9} \\
\midrule
ant-m    & medium         & 38.7 & 41.3 & 40.9 & 39.4 & 37.9 & 38.6 & 42.5 & \bestcell{42.6}{0.6} \\
ant-m    & medium-expert  & 47.0 & 43.3 & 44.4 & 58.3 & 48.1 & 1.0  & 44.0 & \bestcell{75.4}{6.2} \\
ant-m    & expert         & 36.2 & 48.5 & 41.4 & 85.4 & 22.8 & -1.0 & 23.7 & \bestcell{85.5}{11.8} \\
ant-m-r  & medium         & 38.2 & 38.9 & 39.7 & 41.2 & 17.5 & 25.0 & 37.8 & \bestcell{41.4}{1.3} \\
ant-m-r  & medium-expert  & 38.1 & 33.4 & 37.3 & 50.8 & 28.6 & 8.2  & 39.0 & \bestcell{78.3}{8.9} \\
ant-m-r  & expert         & 24.1 & 24.5 & 23.6 & 67.2 & 21.2 & 8.3  & 25.9 & \bestcell{75.0}{15.8} \\
ant-m-e  & medium         & 32.9  & 40.2  & 36.1  & 39.9 & 41.3 & 35.1 & 27.4 & \bestcell{42.0}{0.5} \\
ant-m-e  & medium-expert  & 35.7 & 36.5 & 30.7 & 65.7 & 57.3 & 12.8 & 43.1 & \bestcell{69.5}{11.4} \\
ant-m-e  & expert         & 36.1 & 34.6 & 35.2 & \bestplain{86.4} & 37.9 & 12.3 & 31.1 & 81.9\!\pm\!6.9 \\
\midrule
\multicolumn{2}{@{}c|}{Total Score} & 798.0 & 816.8 & 808.7 & 1274.3 & 760.8 & 570.6 & 859.6 & \bestplain{2078.2} \\
\bottomrule
\end{tabular}
\end{table*}

\begin{table*}[ht]
\caption{\textbf{Performance comparison of cross-domain offline RL algorithms under kinematic shifts.} Abbreviations are as in Table~\ref{tab:morph-shift}. We report normalized target-domain performance (\emph{mean $\pm$ std.}) over \textbf{five} seeds; best per row is highlighted.}
\label{tab:kinematic-shift}
\centering
\small
\tighttab
\begin{tabular}{@{}ll 
  | >{$}c<{$}  
  | >{$}c<{$}  
    >{$}c<{$}  
    >{$}c<{$}  
  | >{$}c<{$}  
    >{$}c<{$}  
    >{$}c<{$}  
  | >{$}c<{$}  
  @{}}
\toprule
Source & Target & \text{IQL} & \text{DARA} & \text{IGDF} & \text{OTDF} & \text{DT} & \text{QT} & \text{DADT} & \text{TAF-DT} \\
\midrule
half-m   & medium         & 12.3 & 10.6 & 23.6 & 40.2 & 32.1 & 14.6 & 14.5 & \bestcell{41.2}{0.5} \\
half-m   & medium-expert  & 10.8 & 12.9 & 9.8  & 10.1 & 22.4 & 6.2  & 21.4 & \bestcell{40.8}{1.5} \\
half-m   & expert         & 12.6 & 12.1 & 12.8 & 8.7  & 13.9 & 5.0  & 15.8 & \bestcell{27.5}{5.0} \\
half-m-r & medium         & 10.0 & 11.5 & 11.6 & 37.8 & 11.6 & 10.7 & 8.8  & \bestcell{40.8}{0.3} \\
half-m-r & medium-expert  & 6.5  & 9.2  & 8.6  & 9.7  & 7.5  & 40.1 & 6.0 & \bestcell{41.4}{1.6} \\
half-m-r & expert         & 13.6 & 14.8 & 13.9 & 7.2  & 2.7  & 19.2 & 5.7  & \bestcell{27.6}{7.4} \\
half-m-e & medium         & 21.8 & 25.9 & 21.9 & 30.7 & 17.5 & 18.7 & 14.5 & \bestcell{41.2}{0.9} \\
half-m-e & medium-expert  & 7.6  & 9.5  & 8.9  & 10.9 & 13.1 & 3.7  & 11.4 & \bestcell{35.5}{12.1} \\
half-m-e & expert         & 9.1  & 10.4 & 10.7 & 3.2  & 19.5 & 10.3 & 19.4 & \bestcell{26.0}{14.2} \\
\midrule
hopp-m   & medium         & 58.7 & 43.9 & 65.3 & 65.6 & 16.4 & 19.7 & 3.6  & \bestcell{66.5}{0.9} \\
hopp-m   & medium-expert  & \bestplain{68.5} & 55.4 & 51.1 & 55.4 & 6.3  & 10.9 & 10.4 & 56.2\!\pm\!28.5 \\
hopp-m   & expert         & 79.9 & 83.7 & \bestplain{87.4} & 35.0 & 3.5  & 7.8  & 3.5 & 57.6\!\pm\!32.7 \\
hopp-m-r & medium         & 36.0 & 39.4 & 35.9 & 35.5 & 11.1 & 23.0 & 16.8 & \bestcell{63.1}{3.4} \\
hopp-m-r & medium-expert  & 36.1 & 34.1 & 36.1 & 47.5 & 3.8  & \bestplain{54.0} & 35.3 & 23.7\!\pm\!17.6 \\
hopp-m-r & expert         & 36.0 & 36.1 & 36.1 & 49.9 & 9.8  & 19.9 & 6.7  & \bestcell{62.0}{20.7} \\
hopp-m-e & medium         & 66.0 & 61.1 & 65.2 & 65.3 & 21.6 & 3.4  & 14.3 & \bestcell{66.8}{1.4} \\
hopp-m-e & medium-expert  & 45.1 & 61.9 & \bestplain{62.9} & 38.6 & 10.3 & 16.9 & 6.6  & 49.2\!\pm\!27.3 \\
hopp-m-e & expert         & 44.9 & \bestplain{84.2} & 52.8 & 29.9 & 18.7 & 10.9 & 15.5 & 68.1\!\pm\!16.8 \\
\midrule
walk-m   & medium         & 34.3 & 35.2 & 41.9 & 49.6 & 31.6 & 26.9 & 27.3 & \bestcell{55.7}{11.0} \\
walk-m   & medium-expert  & 30.2 & \bestplain{51.9} & 42.3 & 43.5 & 35.8 & 19.8 & 19.1 & 37.6\!\pm\!8.2 \\
walk-m   & expert         & 56.4 & 40.7 & \bestplain{60.4} & 46.7 & 35.4 & 50.2 & 38.2 & 55.7\!\pm\!8.0 \\
walk-m-r & medium         & 11.5 & 12.5 & 22.2 & 49.7 & 17.9 & 33.7 & 6.8  & \bestcell{54.2}{19.9} \\
walk-m-r & medium-expert  & 9.7  & 11.2 & 7.6  & \bestplain{55.9} & 24.2 & 49.8 & 28.1 & 31.3\!\pm\!9.7 \\
walk-m-r & expert         & 7.7  & 7.4  & 7.5  & 51.9 & 18.4 & 3.1  & 18.0 & \bestcell{53.7}{6.9} \\
walk-m-e & medium         & 41.8 & 38.1 & 41.2 & 44.6 & 38.6 & 5.6  & \bestplain{78.9} & 60.1\!\pm\!4.9 \\
walk-m-e & medium-expert  & 22.2 & 23.6 & 28.1 & 16.5 & 15.2 & 29.2 & 33.0 & \bestcell{51.4}{21.2} \\
walk-m-e & expert         & 26.3 & 36.0 & 46.2 & 42.4 & 39.3 & 25.0 & 32.2 & \bestcell{56.8}{11.5} \\
\midrule
ant-m    & medium         & 50.0 & 42.3 & 54.5 & 55.4 & 31.2 & 22.5 & 17.7 & \bestcell{59.2}{2.0} \\
ant-m    & medium-expert  & 57.8 & 54.1 & 54.5 & \bestplain{60.7} & 13.0 & 7.9  & 13.5 & 60.3\!\pm\!5.4 \\
ant-m    & expert         & 59.6 & 54.2 & 49.4 & \bestplain{90.4} & 7.0  & 7.0  & 11.7 & 88.7\!\pm\!8.9 \\
ant-m-r  & medium         & 43.7 & 42.0 & 41.4 & \bestplain{52.8} & 31.1 & 22.4 & 30.3 & 51.7\!\pm\!4.6 \\
ant-m-r  & medium-expert  & 36.5 & 36.0 & 37.2 & 54.2 & 26.9 & 12.0 & 33.1 & \bestcell{62.8}{1.9} \\
ant-m-r  & expert         & 24.4 & 22.1 & 24.3 & 74.7 & 27.1 & 8.9  & 25.5 & \bestcell{89.9}{5.0} \\
ant-m-e  & medium         & 49.5 & 44.7 & 41.8 & 50.2 & 21.2 & 9.4  & 11.1 & \bestcell{52.2}{4.8} \\
ant-m-e  & medium-expert  & 37.2 & 33.3 & 41.5 & 48.8 & 16.5 & 10.8 & 13.6 & \bestcell{55.6}{3.2} \\
ant-m-e  & expert         & 18.7 & 17.8 & 14.4 & 78.4 & 7.2  & 8.0  & 11.7 & \bestcell{88.5}{10.3} \\
\midrule
\multicolumn{2}{@{}c|}{Total Score} &
  1193.0 & 1219.8 & 1271.0 & 1547.6 & 679.4 & 647.2 & 680 & \bestplain{1900.6} \\
\bottomrule
\end{tabular}
\end{table*}

\begin{table}[ht]
\centering
\caption{Compact summary across the three dynamics-shift families. “Wins” counts the number of tasks (out of 36 per shift type) on which TAF-DT achieves the highest mean normalized score. “2nd-best” denotes the strongest competing method by total score within that shift family.}
\label{tab:shift-summary}
\small
\begin{tabular}{lccc}
\toprule
Shift type & TAF-DT total & 2nd-best total & Wins \\
\midrule
Morphology & \textbf{2078.2} & 1274.3 (OTDF) & 31/36 \\
Kinematic  & \textbf{1900.6} & 1547.6 (OTDF) & 24/36 \\
Gravity    & \textbf{1347.3} & 1160.7 (OTDF) & 19/36 \\
\bottomrule
\end{tabular}
\end{table}

TAF-DT achieves the strongest aggregate performance on all three shift families. Relative to the best competing method by total score, TAF-DT improves the aggregate normalized return by \textbf{63.1\%} on morphology shifts, \textbf{22.8\%} on kinematic shifts, and \textbf{16.1\%} on gravity shifts. The accompanying wins counts show that these gains are not driven by only a few outlier tasks: TAF-DT is best on 31/36 morphology tasks, 24/36 kinematic tasks, and 19/36 gravity tasks. The gains are also not confined to one environment family: TAF-DT wins all \textbf{9} halfcheetah settings under both morphology and kinematic shifts, dominates hopper and ant under gravity shifts, and remains broadly competitive on rows where another method is locally better. Taken together, these results support aggregate robustness under dynamics shift rather than only a few isolated large wins.

Tables \ref{tab:morph-shift} and \ref{tab:kinematic-shift} further show that TAF-DT consistently outperforms sequence-model baselines (DT, QT, DADT) and usually exceeds recent cross-domain offline RL baselines such as DARA, IGDF, and OTDF. Under morphology shifts, TAF-DT achieves the highest mean score on \textbf{31/36} tasks; under kinematic shifts, it does so on \textbf{24/36} tasks. In the remaining cases, TAF-DT is typically close to the best row-wise method, which suggests that the benefit comes from broad improvement in sequence stitching quality rather than a few isolated large wins. Aggregate return alone, however, does not reveal whether the learned sequences have become more stable exactly where source and target fragments are stitched together. We therefore next turn from performance comparison to sequence-level mechanism validation at stitch junctions.

\begingroup
\setlength{\abovecaptionskip}{2pt}
\setlength{\belowcaptionskip}{0pt}
\begin{figure*}[ht]
    \centering
    \subfloat[]{\includegraphics[width=0.31\textwidth]{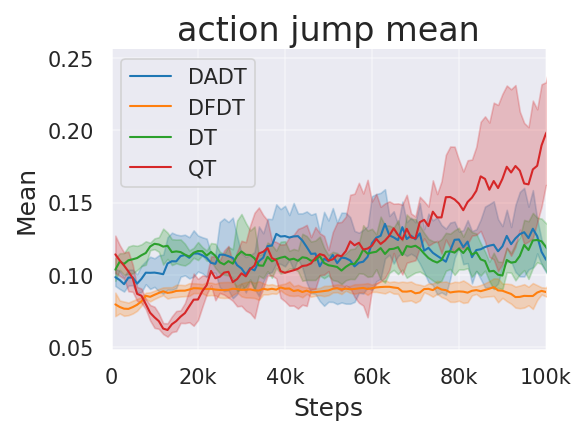}}\hfill
    \subfloat[]{\includegraphics[width=0.31\textwidth]{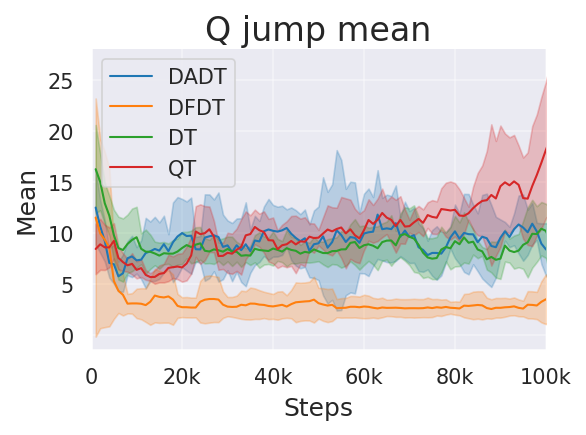}}\hfill
    \subfloat[]{\includegraphics[width=0.31\textwidth]{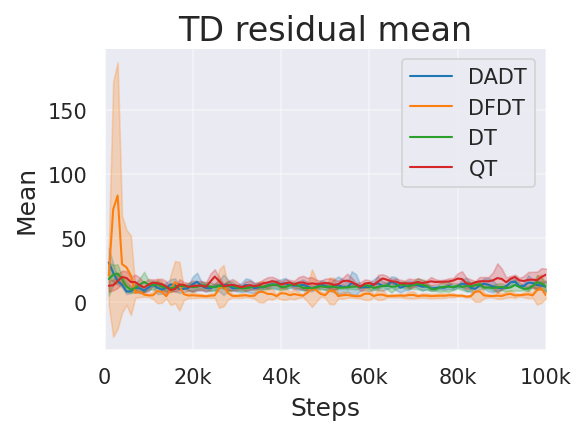}}
    \caption{Sequence-level diagnostics at stitch junctions during training: mean action jump, $Q$-value jump, and local TD residual. Lower is better. The figure serves as mechanism validation that stitched sequences can create localized discontinuities and that TAF-DT yields the smoothest and most stable token transitions among the compared sequence models.}
    \label{fig:jumps}
\end{figure*}
\endgroup

\subsection{Sequence-level diagnostics of stitch-junction stability}

We next test the mechanism claim directly: stitched source--target sequences can introduce localized discontinuities at junctions, and the target-aligned fusion rule should suppress them.

To directly probe sequence semantics at stitch junctions, we precompute the junction index set $\mathcal{J}$ for every relabeled training sequence, namely any boundary where two fragments are concatenated or where the sequence crosses between retained source fragments and target-domain context. At each training checkpoint, we evaluate three quantities on a fixed validation pool of such sequences: (i) the action jump $J_a=\mathbb{E}_{t^\star\in\mathcal{J}}\|\pi(s_{t^\star})-\pi(s_{t^\star-1})\|_2$; (ii) the $Q$-jump $J_Q=\mathbb{E}_{t^\star\in\mathcal{J}}|Q(s_{t^\star},\pi(s_{t^\star}))-Q(s_{t^\star-1},\pi(s_{t^\star-1}))|$; and (iii) the local TD residual $\mathbb{E}_{t\in\mathcal{N}(t^\star)}|r_t+\gamma V(s_{t+1})-V(s_t)|$, where $\mathcal{N}(t^\star)=\{t:|t-t^\star|\le w\}\cap\{1,\dots,T-1\}$ with $w=2$. Figure~\ref{fig:jumps} reports moving means over checkpoints for TAF-DT, DADT, QT, and DT under the same backbone, budget, and data.

The three diagnostics close the loop on the sequence-level claim motivating TAF-DT. First, they show that stitch junctions are indeed a concrete source of localized instability under weaker fusion rules: DT and DADT maintain visibly larger action jumps, while QT drifts upward to roughly $0.25$--$0.30$, indicating that naive or weakly controlled stitching leaves substantial token discontinuity near fragment boundaries. Second, they show that the target-aligned fusion rule suppresses this instability in a consistent and sustained way. TAF-DT keeps the action-jump mean near $0.06$--$0.09$ throughout training, drives $Q$-jumps rapidly down to a low and stable range around $2$--$4$, and, after a brief early spike, settles to the lowest sustained TD residual level, around $3$--$6$, while the competing models remain higher and more volatile for most of training. Taken together, the evidence supports the intended mechanism rather than only the final score: reliable source fusion should be judged by stitch-junction stability at the sequence level, and TAF-DT improves exactly that property through better fragment selection, more stable post-stitch sequence semantics, and smoother value behavior around the junction.

\subsection{Ablation studies}
\label{sec:main_ablation}

We now isolate the three components most directly implied by the target-Bellman-risk-matching view: fragment filtering, advantage-conditioned tokens, and $Q$-regularized sequence training.

\begin{table*}[htb]
\centering
\caption{Ablation on fragment filtering.}
\label{tab:data_filter_ablation}
\small
\begin{tabular}{lll|ccc}
\toprule
Source & Target & Shift & TAF-DT & OT-only & MMD-only \\
\midrule
ant-m & medium & gravity & $\textbf{63.8}\pm1.4$ & $59.6\pm6.1$ & $62.6\pm2.4$ \\
hopp-m & medium & kinematic & $\textbf{66.5}\pm0.9$ & $17.3\pm8.8$ & $64.2\pm1.5$ \\
hopp-m-r & medium & morph & $\textbf{56.7}\pm0.0$ & $50.0\pm1.1$ & $49.0\pm2.7$ \\
walk-m  & medium & morph-expert & $\textbf{41.1}\pm5.3$ & $31.7\pm1.1$ & $40.4\pm0.0$ \\
\bottomrule
\end{tabular}
\end{table*}

\subsubsection{Ablation on fragment filtering}

Table~\ref{tab:data_filter_ablation} compares full TAF-DT (MMD+OT) with MMD-only and OT-only variants on four representative source--target pairs spanning gravity, kinematic, and morphology shifts. TAF-DT achieves the highest return in all four settings, which supports the gate-then-weight principle implied by Theorem~\ref{thm:bellman-risk}. The largest gap appears on the kinematic task \texttt{hopp-m}$\to$\texttt{medium}, where OT-only collapses to $17.3\pm 8.8$ while TAF-DT reaches $66.5\pm 0.9$, suggesting that action-feasibility weighting alone is insufficient when the retained source support is not first cleaned by a state-structure gate. MMD-only is substantially stronger than OT-only and often close to TAF-DT, confirming that state-structure alignment is the first-order ingredient. However, the full two-level design remains consistently best, especially on harder morphology and gravity settings where the additional OT weighting improves action-level feasibility after MMD screening.

\begin{table}[htb]
\centering
\caption{Ablation on advantage-conditioned tokens under the kinematic shift.}
\label{tab:adv_ablation}
\small
\begin{tabular}{lll|cc}
\toprule
Source   & Target         & Advantage & RTG \\
\midrule
ant-m    & medium         & $\textbf{59.2}\pm2.0$ & $52.2\pm2.7$ \\
hopp-m   & medium         & $\textbf{66.5}\pm0.9$ & $64.6\pm2.3$ \\
hopp-m   & medium-expert  & $\textbf{78.5}\pm2.6$ & $49.7\pm24.8$ \\
\bottomrule
\end{tabular}
\end{table}

\subsubsection{Ablation on advantage-conditioned tokens}
Table~\ref{tab:adv_ablation} shows that replacing return-to-go tokens with advantage-conditioned tokens consistently improves TAF-DT on representative kinematic-shift tasks. The gains are modest but stable on \texttt{ant-m}$\to$\texttt{medium} ($59.2$ vs.\ $52.2$) and \texttt{hopp-m}$\to$\texttt{medium} ($66.5$ vs.\ $64.6$), and become large on the more demanding \texttt{hopp-m}$\to$\texttt{medium-expert} setting ($78.5$ vs.\ $49.7$). This pattern is consistent with the sequence-aware stitching mechanism in Section~\ref{sec:adv-tokens}: advantage tokens provide a more local and scale-robust supervision signal than absolute RTG values when trajectories of heterogeneous quality are mixed across domains.

\subsubsection{Ablation on Q-regularization coefficient}

Table~\ref{tab:q_ablation} studies the coefficient $\alpha$ of the junction-aware $Q$-regularizer. The main trend is that a nonzero regularizer is often beneficial on difficult cross-domain tasks, but the optimal strength is task dependent. For example, on \texttt{ant-m}$\to$\texttt{medium} (kinematic), performance improves from $50.5\pm4.5$ at $\alpha=0$ to $59.2\pm2.0$ at $\alpha=3.5$; on \texttt{walk-m-e}$\to$\texttt{medium} (kinematic), it rises from $46.2\pm12.5$ to $59.6\pm7.8$ at $\alpha=5.0$. At the same time, some tasks such as \texttt{hopp-m}$\to$\texttt{medium} (gravity) perform best with no regularization, while others are nearly insensitive. We therefore use $\alpha=3.5$ as a robust default: it is best or near-best on several representative tasks and avoids overfitting the method to a single shift family.

\begin{table*}[ht]
\centering
\caption{Ablation on the $Q$-regularization coefficient $\alpha$. Results show the mean and standard deviation over the last three checkpoints from a single seed.}
\label{tab:q_ablation}
\small
\begin{tabular}{lll|cccc}
\toprule
Source  & Target & Shift & $\alpha=0.0$ & $\alpha=1.0$ & $\alpha=3.5$ & $\alpha=5.0$ \\
\midrule
half-m & medium & morph & $43.9\pm0.4$ & $\textbf{44.2}\pm0.1$ &  $43.3\pm0.2$ & $43.9\pm0.1$ \\
half-m-e & medium & morph & $\textbf{44.4}\pm0.1$ & $44.2\pm0.4$ & $44.2\pm0.6$ & $44.1\pm0.2$ \\
hopp-m-e & medium & morph & $25.1\pm1.0$ & $27.1\pm2.2$ &  $56.5\pm25.7$ &  $\textbf{93.5}\pm4.7$ \\
hopp-m-e & medium & kinematic & $62.3\pm2.4$ & $\textbf{65.8}\pm1.9$ & $63.1\pm2.2$ & $63.4\pm0.9$ \\
hopp-m & medium & gravity  & $\textbf{82.4}\pm7.5$ & $59.8\pm22.3$ & $60\pm22.3$ & $42.3\pm7.3$ \\
hopp-m-e & medium & gravity & $61.7\pm32.3$ & $62.3\pm17.0$ & $61.2\pm27.3$ & $\textbf{63.6}\pm18.0$ \\
walk-m-e & medium & morph & $42.3\pm8.3$ & $41.2\pm13.1$ &  $\textbf{51.4}\pm9.2$ & $45.6\pm3.0$ \\
walk-m-e & medium & kinematic & $46.2\pm12.5$ & $50.4\pm14.3$ & $47.4\pm21.9$ & $\textbf{59.6}\pm7.8$ \\
ant-m & medium & kinematic & $50.5\pm4.5$ & $56.6\pm3.3$ & $\textbf{59.2}\pm2.0$ & $53.2\pm7.7$ \\
ant-m-e & medium & gravity & $62.4\pm1.1$ & $65.1\pm5.1$ & $63.8\pm9.5$ & $\textbf{68.9}\pm6.4$ \\
\bottomrule
\end{tabular}
\end{table*}

\section{Discussion and Limitations}\label{sec:discussion}

This work should be read first as a formulation and Bellman-risk-level analysis of \emph{target-aligned sequence fusion}, and only second as one particular Decision-Transformer-based instantiation. We intentionally keep the formal guarantee at the target Bellman-risk level, because propagating such control to global policy performance under shifted offline sequence learning would require stronger concentrability, coverage, and policy-proximity assumptions that are difficult to verify in practice. The empirical study therefore tests whether the risk-matching fusion rule improves both target-domain return and stitch-junction stability rather than treating the theory as a full return guarantee.

Our empirical evaluation is deliberately centered on simulated D4RL-style MuJoCo benchmarks with controlled dynamics shifts, which provide a concrete testbed for studying how imported source trajectories affect target-side critic reliability, sequence stitchability, and downstream performance. This choice enables repeatable comparisons under well-specified shift families, but it also means that the present evidence remains concentrated in a sequential-control setting rather than in broader real-world transfer pipelines. In addition, the current study instantiates the proposed target-aligned fusion principle with a Decision Transformer backbone so that the interaction between fusion, sequence construction, and training can be analyzed cleanly. The formulation-level claim concerns the target-risk-driven gate--then--weight principle; by contrast, the precise MMD features, OT cost geometry, and critic-conditioning hyperparameters used in TAF-DT are implementation choices whose best setting can remain task- and shift-dependent. The same feasibility-weighted fusion distribution $\mathbb P_{\mathrm{mix}}^{I,w}$ is, in principle, compatible with TD-based offline RL methods such as IQL, with actor--critic pipelines, and more broadly with other decision-sequence learners that must absorb externally sourced trajectories in broader distribution-shifted settings. Extending the present framework to larger-scale settings, broader embodiment shifts, real-world transfer scenarios, and non-transformer sequence learners is therefore an important direction for future work.

\section{Conclusion}\label{sec:conclusion}

Motivated by distribution-shifted decision-sequence learning, we studied how a target-domain sequence learner with Decision Transformers should absorb externally sourced trajectories, and developed that question concretely in a dynamics-shifted offline control setting. The central message is that imported source trajectories should not be fused heuristically; they should be admitted only insofar as the resulting fused distribution preserves target-side Bellman reliability. In our setting, this leads to a gate--then--weight fusion rule, followed by sequence-aware stitching and lightweight target-aligned Transformer training. Our analysis ties target Bellman risk to measurable state-structure and transport mismatches, clarifying why MMD-based selection and OT-based weighting play complementary roles. Across gravity, kinematic, and morphology shifts, TAF-DT achieves the strongest aggregate returns together with more stable decision-sequence semantics. The present evidence supports target-aligned risk matching as a principled alternative to heuristic data preprocessing in shifted sequential learning.

\bibliographystyle{IEEEtran}
\bibliography{iclr2026_conference}
\clearpage
\appendices
\twocolumn[
\begin{center}
{\LARGE\bfseries Supplementary Material\par}
\vspace{0.8em}
\end{center}
]

\section{Additional Lemma~\ref{lem:exp} and its proof}
In the appendix we only use two standard facts. First, for a measurable map $T$ and a distribution $\mu$, we write $T_{\#}\mu$ for the pushforward distribution of $T(x)$ when $x\sim\mu$. Second, we use the Kantorovich--Rubinstein duality, which gives the usual way to control expectation differences by the $1$-Wasserstein distance.

\begin{definition}[Kantorovich--Rubinstein duality]
Let $(\mathcal X,d)$ be a metric space and let $\mu,\nu$ be probability measures on $\mathcal X$ with finite first moments. The $1$-Wasserstein distance is
\[
W_1(\mu,\nu)
:= \inf_{\pi\in\Pi(\mu,\nu)} \int_{\mathcal X\times\mathcal X} d(x,y)\, \mathrm d\pi(x,y),
\]
where $\Pi(\mu,\nu)$ is the set of couplings of $\mu$ and $\nu$. Its dual form is
\[
W_1(\mu,\nu)
= \sup_{\|f\|_{\mathrm{Lip}}\le 1}
\Big\{\int_{\mathcal X} f\,\mathrm d\mu - \int_{\mathcal X} f\,\mathrm d\nu\Big\}.
\]
Consequently, any $L$-Lipschitz function $g$ satisfies
\[
\Big| \mathbb E_{\mu}[g] - \mathbb E_{\nu}[g] \Big|
\le L\,W_1(\mu,\nu).
\]
\end{definition}

\begin{restatable}[Expectation deviation under the weighted data fusion]{lemma}{expdev}\label{lem:exp}
For any $1$-Lipschitz $g(u)$ and any $h$ with $\|h\|_{\mathcal H}\le 1$, we have 
$
\big|\mathbb E_{\mathbb P_{\mathrm{mix}}^{I,w}}g-\mathbb E_{\mathbb P_T}g\big|
\le \beta\,\Delta_w$ and 
$
\big|\mathbb E_{\mathbb P_{\mathrm{mix}}^{I,w}}h-\mathbb E_{\mathbb P_T}h\big|
\le \beta\,\Delta_m.
$
\end{restatable}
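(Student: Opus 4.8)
The plan is to handle both inequalities in parallel: each reduces to a single observation about the mixture structure, followed by the appropriate dual characterization of the discrepancy. First I would expand the fusion distribution from Definition~\ref{def:two-level-framework}. Since $\mathbb P_{\mathrm{mix}}^{\,w}=(1-\beta)\mathbb P_T+\beta\,\mathbb P_S^{\,w}$, linearity of expectation gives
\[
\mathbb E_{\mathbb P_{\mathrm{mix}}^{\,w}}[g]-\mathbb E_{\mathbb P_T}[g]=\beta\big(\mathbb E_{\mathbb P_S^{\,w}}[g]-\mathbb E_{\mathbb P_T}[g]\big),
\]
and identically with $h$ in place of $g$: the $(1-\beta)\mathbb E_{\mathbb P_T}$ term cancels against $-\mathbb E_{\mathbb P_T}$, leaving exactly the factor $\beta$ multiplying the weighted-source-to-target gap. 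This is the only step the two bounds share.

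For the Wasserstein bound, $g$ is $1$-Lipschitz, so I would apply the Kantorovich--Rubinstein inequality quoted just above the lemma (with $L=1$) to the pair $(\mathbb P_S^{\,w},\mathbb P_T)$, giving $\big|\mathbb E_{\mathbb P_S^{\,w}}[g]-\mathbb E_{\mathbb P_T}[g]\big|\le W_1(\mathbb P_S^{\,w},\mathbb P_T)=\Delta_w$ by Definition~\ref{def:radii}. Multiplying by $\beta$ closes the first claim; here $\Delta_w$ is defined as precisely this population Wasserstein distance, so no further work is needed.

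For the MMD bound, $h$ lies in the unit ball of the RKHS $\mathcal H$ induced by $k$, and $\mathbb E_{\mathbb P}[h]$ is read as $\mathbb E_{u\sim\mathbb P}[h(f_\phi(s))]$. For fixed $h$ with $\|h\|_{\mathcal H}\le 1$, the gap $\big|\mathbb E_{\mathbb P_S^{\,w}}[h]-\mathbb E_{\mathbb P_T}[h]\big|$ is at most the supremum over the unit ball, which is exactly $\mathrm{MMD}_k$ between the latent pushforwards of $\mathbb P_S^{\,w}$ and $\mathbb P_T$. It then remains to bound this population MMD by $\Delta_m$, after which multiplying by $\beta$ finishes the proof.

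I expect this last step to be the main obstacle, because $\Delta_m$ is defined at the fragment level, $\sup_{\tau^S:I_m(\tau^S)=1}d^m(\tau^S)$, rather than as a single population distance. The structural fact I would exploit is that the weight $w(u)$ carries the hard gate $I_m(\tau^S)$, so $\mathbb P_S^{\,w}$ places mass only on retained fragments and its latent pushforward is a convex mixture of per-fragment latent distributions. Since the kernel mean embedding is linear, the triangle inequality in $\mathcal H$ bounds the MMD of this mixture to a fixed target by the supremum of the per-fragment MMDs. A second application of convexity (Jensen) is needed to pass from the target \emph{population} latent law, itself a mixture over target fragments, to the target-fragment-averaged empirical form of $d^m$ in \eqref{eq:mmd_cost}, giving $\mathrm{MMD}_k(\mu_{\tau^S},\mu_T)\le d^m(\tau^S)\le\Delta_m$ for each gated source fragment. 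I would therefore treat $\Delta_m$ as a population radius over gated fragments, after which the two convexity arguments chain together and deliver the bound $\beta\,\Delta_m$.
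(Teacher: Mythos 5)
Your proposal is correct and follows essentially the same route as the paper's proof: mixture linearity to extract the factor $\beta$, Kantorovich--Rubinstein duality for the Wasserstein bound, the kernel-mean-embedding characterization of MMD, and convexity of MMD over the retained fragments. Your extra Jensen step relating $\mathrm{MMD}_k(\mu_{\tau^S},\mu_T)$ to the fragment-averaged quantity $d^m(\tau^S)$ is in fact slightly more careful than the paper, which silently equates $\sup_{\tau}\mathrm{MMD}_k(\mu_\tau,\mu_T)$ with $\Delta_m$.
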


\begin{proof}
Let $\pi_z:\mathcal U\to\mathcal Z$ map $u=(s,a,s')$ to the latent state $z=f_\phi(s)$. Denote the pushforward marginals by
$\mu_T:=\pi_{z\#}\mathbb P_T$ and $\mu_S^{\,w}:=\pi_{z\#}\mathbb P_S^{I,w}$.
For each retained fragment $\tau^S$ (i.e., $I(\tau^S)=1$), let $\mu_\tau$ be its latent-state (empirical or normalized) distribution.

\smallskip
\textbf{Step 1 (Lipschitz part via Kantorovich--Rubinstein duality).}
By linearity of expectation under the convex mixture, we have
\begin{align*}
&\mathbb E_{\mathbb P_{\mathrm{mix}}^{I,w}}g
=(1-\beta)\,\mathbb E_{\mathbb P_T}g+\beta\,\mathbb E_{\mathbb P_S^{I,w}}g \\
&\Rightarrow\
\mathbb E_{\mathbb P_{\mathrm{mix}}^{I,w}}g-\mathbb E_{\mathbb P_T}g
= \beta\bigl(\mathbb E_{\mathbb P_S^{I,w}}g-\mathbb E_{\mathbb P_T}g\bigr).
\end{align*}
Taking absolute values and applying the Kantorovich--Rubinstein duality on $(\mathcal U,\rho)$,
\[
\sup_{\mathrm{Lip}(g)\le 1}\bigl|\mathbb E_{\mathbb P_S^{I,w}}g-\mathbb E_{\mathbb P_T}g\bigr|
= W_1(\mathbb P_S^{I,w},\mathbb P_T)=\Delta_w.
\]
Hence, for any $1$-Lipschitz $g$,
\[
\bigl|\mathbb E_{\mathbb P_{\mathrm{mix}}^{I,w}}g-\mathbb E_{\mathbb P_T}g\bigr|
\le \beta\,\Delta_w.
\]

\smallskip
\textbf{Step 2 (MMD part on latent states).}
For $h\in\mathcal H$ acting on $z$, expectations under triple distributions reduce to those under their latent pushforwards:
$\mathbb E_{\mathbb P}h:=\mathbb E_{z\sim \pi_{z\#}\mathbb P}[h(z)]$.
As above,
\[
\mathbb E_{\mathbb P_{\mathrm{mix}}^{I,w}}h-\mathbb E_{\mathbb P_T}h
= \beta\bigl(\mathbb E_{\mathbb P_S^{I,w}}h-\mathbb E_{\mathbb P_T}h\bigr)
= \beta\bigl(\mathbb E_{\mu_S^{\,w}}h-\mathbb E_{\mu_T}h\bigr).
\]
Taking the supremum over the unit RKHS ball and using the kernel mean embedding characterisation of MMD,
\[
\sup_{\|h\|_{\mathcal H}\le 1}\bigl|\mathbb E_{\mu_S^{\,w}}h-\mathbb E_{\mu_T}h\bigr|
=\mathrm{MMD}_k(\mu_S^{\,w},\mu_T).
\]
Therefore, for any $\|h\|_{\mathcal H}\le 1$,
\[
\bigl|\mathbb E_{\mathbb P_{\mathrm{mix}}^{I,w}}h-\mathbb E_{\mathbb P_T}h\bigr|
\le \beta\,\mathrm{MMD}_k(\mu_S^{\,w},\mu_T).
\]

\smallskip
\textbf{Step 3 (Bounding $\mathrm{MMD}_k(\mu_S^{\,w},\mu_T)$ by $\Delta_m$).}
Since $\mu_S^{\,w}$ is the latent marginal induced by the retained-fragment mixture, it can be written as
\[
\mu_S^{\,w}=\sum_{\tau^S:I(\tau^S)=1}\alpha_\tau\,\mu_\tau,
\qquad
\alpha_\tau\ge 0,
\quad
\sum_\tau \alpha_\tau=1.
\]
Let $m_\mu:=\mathbb E_{z\sim\mu}[k(z,\cdot)]$ denote the RKHS mean embedding. Then
\[
m_{\mu_S^{\,w}}=\sum_\tau \alpha_\tau\,m_{\mu_\tau}.
\]
Hence, by the triangle inequality,
\begin{align*}
\mathrm{MMD}_k(\mu_S^{\,w},\mu_T)
&=\bigl\|m_{\mu_S^{\,w}}-m_{\mu_T}\bigr\|_{\mathcal H_k} \\
&=\Bigl\|\sum_\tau \alpha_\tau\bigl(m_{\mu_\tau}-m_{\mu_T}\bigr)\Bigr\|_{\mathcal H_k} \\
&\le \sum_\tau \alpha_\tau\bigl\|m_{\mu_\tau}-m_{\mu_T}\bigr\|_{\mathcal H_k} \\
&= \sum_\tau \alpha_\tau\,\mathrm{MMD}_k(\mu_\tau,\mu_T) \\
&\le \sup_{\tau:I_m(\tau)=1}\mathrm{MMD}_k(\mu_\tau,\mu_T).
\end{align*}
Now let $\mu_{\tau^T}$ denote the latent marginal induced by a target fragment $\tau^T$, so that
\[
\mu_T=\mathbb E_{\tau^T\sim\mathbb P_T}[\mu_{\tau^T}].
\]
By convexity of MMD in its second argument,
\begin{align*}
\mathrm{MMD}_k(\mu_\tau,\mu_T)
&=\mathrm{MMD}_k\!\Bigl(\mu_\tau,\mathbb E_{\tau^T\sim\mathbb P_T}[\mu_{\tau^T}]\Bigr)\\
&\le
\mathbb E_{\tau^T\sim\mathbb P_T}\bigl[\mathrm{MMD}_k(\mu_\tau,\mu_{\tau^T})\bigr]
=d^m(\tau).
\end{align*}
Therefore,
\[
\sup_{\tau:I(\tau)=1}\mathrm{MMD}_k(\mu_\tau,\mu_T)
\le
\sup_{\tau:I(\tau)=1} d^m(\tau)
=\Delta_m.
\]
Combining the above inequalities yields $\mathrm{MMD}_k(\mu_S^{\,w},\mu_T)\le \Delta_m$, and therefore
\[
\bigl|\mathbb E_{\mathbb P_{\mathrm{mix}}^{I,w}}h-\mathbb E_{\mathbb P_T}h\bigr|
\le \beta\,\Delta_m,\qquad \forall\,\|h\|_{\mathcal H}\le 1.
\]

Combining the Lipschitz/Wasserstein bound (Step~1) and the RKHS/MMD bound (Steps~2--3) yields
\[
\bigl|\mathbb E_{\mathbb P_{\mathrm{mix}}^{I,w}}g-\mathbb E_{\mathbb P_T}g\bigr|
\le \beta\,\Delta_w,\quad
\bigl|\mathbb E_{\mathbb P_{\mathrm{mix}}^{I,w}}h-\mathbb E_{\mathbb P_T}h\bigr|
\le \beta\,\Delta_m,
\]
as claimed.
\end{proof}

\section{Additional lemma on latent value heads and its proof}\label{sec:proof-of-lem-valapprox}
This appendix only needs a lightweight representation argument. In practice, the encoder produces a latent code $z=f_\phi(s)$ and the value head is computed from that code. The desired approximation chain is
\[
V(s)\ \approx\ \widetilde V(f_\phi(s))\ \approx\ h_V(f_\phi(s)).
\]
The first approximation says that the encoder is already approximately value-sufficient: once $s$ is mapped to $z=f_\phi(s)$, the remaining value ambiguity is at most $\varepsilon_H$. The second approximation explains why the value head on the latent code may be taken from an RKHS function class. This is where universality of the kernel is used.

A continuous kernel $k$ on a compact metric space $K$ is called \emph{universal} if its RKHS $\mathcal H_k$ is dense in $C(K)$ under the uniform norm, i.e., for every continuous $g:K\to\mathbb R$ and every $\eta>0$, there exists $h\in\mathcal H_k$ such that
\[
\sup_{z\in K}|g(z)-h(z)|\le \eta.
\]
This property means that the RKHS is expressive enough to uniformly approximate any continuous latent prediction rule on $K$. Standard examples include the Gaussian RBF kernel and the Laplace kernel on compact subsets of $\mathbb R^d$.

\begin{restatable}[Latent value-head representation under a universal kernel]{lemma}{approxvalue}\label{lem:value-approx-from-ass}
Let $K=\overline{f_\phi(\mathcal S)}\subset\mathbb R^d$ be the latent image of the encoder, and let $k$ be a universal kernel on the compact metric space $K$ with RKHS $\mathcal H$. Assume that there exists a continuous latent function $\widetilde V:K\to\mathbb R$ such that
\[
\sup_{s\in\mathcal S}\,|V(s)-\widetilde V(f_\phi(s))|\le \varepsilon_H.
\]
Then for every $\eta>0$ there exists $h_V\in\mathcal H$ such that
\[
\sup_{s\in\mathcal S}\,|V(s)-h_V(f_\phi(s))|\le \varepsilon_H+\eta.
\]
Equivalently, once the encoder preserves value information up to $\varepsilon_H$, restricting the value head to the RKHS incurs only an additional uniform approximation error $\eta$.
\end{restatable}

\begin{proof}
The argument is a direct two-step approximation.

\emph{Step 1: latent representation error.}
By assumption, the value function is already approximately a function of the latent code in the sense that
\[
\sup_{s\in\mathcal S}|V(s)-\widetilde V(f_\phi(s))|\le \varepsilon_H.
\]
This is the representation error induced by the encoder.

\emph{Step 2: latent value-head approximation on the latent space.}
Because $k$ is universal on the compact metric space $K$, the RKHS $\mathcal H$ is dense in $C(K)$ under the uniform norm. Since $\widetilde V\in C(K)$, for any prescribed $\eta>0$ there exists $h_V\in\mathcal H$ such that
\[
\sup_{z\in K}|\widetilde V(z)-h_V(z)|\le \eta.
\]
In other words, the continuous latent value map $\widetilde V$ can be approximated uniformly by a latent value head $h_V$ drawn from the RKHS.

Combining the two steps, for every $s\in\mathcal S$ we have
\[
\begin{aligned}
&\quad |V(s)-h_V(f_\phi(s))|\\
&\le |V(s)-\widetilde V(f_\phi(s))|
   +|\widetilde V(f_\phi(s))-h_V(f_\phi(s))| \\
&\le \varepsilon_H + \sup_{z\in K}|\widetilde V(z)-h_V(z)| \\
&\le \varepsilon_H+\eta.
\end{aligned}
\]
Taking the supremum over $s$ proves the claim.
\end{proof}

\begin{remark}
The latent function $\widetilde V$ is the intuitive object one usually has in mind when writing a value head on top of an encoder: first compute $z=f_\phi(s)$, then predict the value from $z$. The lemma simply formalizes that intuition. The assumption
\[
V(s)\approx \widetilde V(f_\phi(s))
\]
means that the encoder has already retained the value-relevant information, while universality of the kernel ensures that the chosen function class for the head is not overly restrictive. Thus the final predictor can be written in the familiar form $h_V(f_\phi(s))$ with $h_V\in\mathcal H$.
\end{remark}

\section{Additional Lemma~\ref{lem:bellman} and its proof}
\begin{restatable}[Weighted Bellman error transfer]{lemma}{errortrans}\label{lem:bellman}
Let $y=r(s,a)+\gamma V(s')$. There exist constants $R_1,R_2>0$ (depending on $R_{\max}$, $L_r$, and the encoder/kernel bounds) such that
\begin{equation}
\begin{aligned}
&\big|\mathbb E_{\mathbb P_{\mathrm{mix}}^{I,w}}[y-V(s)]-\mathbb E_{\mathbb P_T}[y-V(s)]\big|\\
&\qquad\qquad\qquad\;\le\;\beta\,(R_1\,\Delta_m+R_2\,\Delta_w)\ +\ 4\beta\,\varepsilon_H.
\end{aligned}
\end{equation}
\end{restatable}
\begin{proof}
\textbf{Step 1 (Bounding the $V$-residual transfer).}
Write the one-step TD residual as
\[
R(u)\;=\;y-V(s)\;=\;r(s,a)+\gamma\,V(s')-V(s),
\qquad u=(s,a,s').
\]
By linearity of expectation under the mixture,
\[
\mathbb E_{\mathbb P_{\mathrm{mix}}^{I,w}}R-\mathbb E_{\mathbb P_T}R
= \beta\big(\mathbb E_{\mathbb P_{S}^{I,w}}R-\mathbb E_{\mathbb P_T}R\big).
\]
Hence
\begin{equation}
\begin{aligned}
\big\lvert \mathbb E_{\mathbb P_{\mathrm{mix}}^{I,w}}R-\mathbb E_{\mathbb P_T}R \big\rvert
\le &\beta\Big(
\underbrace{\big\lvert \mathbb E_{\mathbb P_{S}^{I,w}}r(s,a)-\mathbb E_{\mathbb P_T}r(s,a)\big\rvert}_{\text{(I)}}\\
&\ +\underbrace{\big\lvert \mathbb E_{\mathbb P_{S}^{I,w}}V(s)-\mathbb E_{\mathbb P_T}V(s)\big\rvert}_{\text{(II)}}\\
&\ +\gamma\underbrace{\big\lvert \mathbb E_{\mathbb P_{S}^{I,w}}V(s')-\mathbb E_{\mathbb P_T}V(s')\big\rvert}_{\text{(III)}}
\Big).
\end{aligned}
\label{eq:diff_res}
\end{equation}

\emph{Term (I): reward transfer via $W_1$.}
Under Assumption~\ref{ass:bounded}, the reward function is Lipschitz on the transition space $(\mathcal U,\rho)$, so by the Kantorovich--Rubinstein duality,
\[
\big\lvert \mathbb E_{\mathbb P_{S}^{I_m,w}}r(s,a)-\mathbb E_{\mathbb P_T}r(s,a)\big\rvert
\le L_r\,W_1(\mathbb P_{S}^{I,w},\mathbb P_T)
= L_r\,\Delta_w.
\]
Absorb $L_r$ into a constant $R_2>0$ to obtain
\begin{equation}
\big\lvert \mathbb E_{\mathbb P_{S}^{I,w}}r(s,a)-\mathbb E_{\mathbb P_T}r(s,a)\big\rvert
\le R_2\,\Delta_w.
\label{eq:diff_r}
\end{equation}

\emph{Term (II): current-state value transfer via Lemmas~\ref{lem:exp} and \ref{lem:value-approx-from-ass}.}
Let $z=f_\phi(s)$ and denote the corresponding latent pushforwards by
$\mu_T:=\pi_{z\#}\mathbb P_T$ and $\mu_S^{\,w}:=\pi_{z\#}\mathbb P_S^{I,w}$.
By Lemma~\ref{lem:value-approx-from-ass}, there exists a latent value head $h_V\in\mathcal H$ with
\[
\sup_{s\in\mathcal S}|V(s)-h_V(f_\phi(s))|\le \varepsilon_H+\eta.
\]
Therefore,
\begin{align*}
&\quad\big\lvert \mathbb E_{\mathbb P_{S}^{I,w}}V(s)-\mathbb E_{\mathbb P_T}V(s)\big\rvert \\
&\le \big\lvert \mathbb E_{z\sim \mu_S^{\,w}} h_V(z)-\mathbb E_{z\sim \mu_T} h_V(z)\big\rvert + 2(\varepsilon_H+\eta) \\
&\le \|h_V\|_{\mathcal H}\,\mathrm{MMD}_k(\mu_S^{\,w},\mu_T)+2(\varepsilon_H+\eta) \\
&\le C_V\,\mathrm{MMD}_k(\mu_S^{\,w},\mu_T)+2(\varepsilon_H+\eta),
\end{align*}
where $C_V:=\|h_V\|_{\mathcal H}$ is absorbed into constants. By Step~3 of Lemma~\ref{lem:exp},
\(\mathrm{MMD}_k(\mu_S^{\,w},\mu_T)\le \Delta_m\), hence
\begin{equation}
\big\lvert \mathbb E_{\mathbb P_{S}^{I,w}}V(s)-\mathbb E_{\mathbb P_T}V(s)\big\rvert
\le C_V\,\Delta_m+2(\varepsilon_H+\eta).
\label{eq:diff_V}
\end{equation}

\emph{Term (III): next-state value transfer via the same latent argument.}
Let $z'=f_\phi(s')$ and denote the next-state latent pushforwards by
$\mu_T':=\pi_{z'\#}\mathbb P_T$ and $\mu_S^{\,w,'}:=\pi_{z'\#}\mathbb P_S^{I,w}$.
Applying Lemma~\ref{lem:value-approx-from-ass} to the next-state variable $s'\in\mathcal S$ and repeating the same argument gives
\begin{align*}
&\quad\big\lvert \mathbb E_{\mathbb P_{S}^{I,w}}V(s')-\mathbb E_{\mathbb P_T}V(s')\big\rvert \\
&\le \big\lvert \mathbb E_{z'\sim \mu_S^{\,w,'}} h_V(z')-\mathbb E_{z'\sim \mu_T'} h_V(z')\big\rvert + 2(\varepsilon_H+\eta) \\
&\le C_V\,\mathrm{MMD}_k(\mu_S^{\,w,'},\mu_T')+2(\varepsilon_H+\eta).
\end{align*}
Moreover, the same convexity and pushforward argument as in Lemma~\ref{lem:exp} applies to the next-state projection $\pi_{z'}(u)=f_\phi(s')$, so $\mathrm{MMD}_k(\mu_S^{\,w,'},\mu_T')\le \Delta_m$. Hence
\begin{equation}
\big\lvert \mathbb E_{\mathbb P_{S}^{I,w}}V(s')-\mathbb E_{\mathbb P_T}V(s')\big\rvert
\le C_V\,\Delta_m+2(\varepsilon_H+\eta).
\label{eq:diff_Vnext}
\end{equation}

Combining \eqref{eq:diff_r}, \eqref{eq:diff_V}, and \eqref{eq:diff_Vnext} in \eqref{eq:diff_res}, and using $\gamma\le 1$, we obtain
\begin{align*}
\big\lvert \mathbb E_{\mathbb P_{\mathrm{mix}}^{I,w}}R-\mathbb E_{\mathbb P_T}R \big\rvert
\le \beta\Big(R_2\Delta_w + 2C_V\Delta_m + 4(\varepsilon_H+\eta)\Big).
\end{align*}
Let $R_1:=2C_V$ and send $\eta\downarrow 0$. This proves the stated inequality with an explicit $4\beta\varepsilon_H$ fiber-error term.
\end{proof}

\begin{corollary}[Weighted Bellman residual transfer for $\pi_{\rm mix}$]\label{cor:transfer-pimix}
Let 
\[
R(s,a,s'):=r(s,a)+\gamma V(s')-V(s)
\]
be the one-step TD residual as in Lemma~\ref{lem:bellman}. For any measurable $\varphi:\mathcal S\to\mathbb R$ with $\|\varphi\|_\infty\le 1$, there exist constants $R_1,R_2>0$ (reusing the same symbols as in Lemma~\ref{lem:bellman}, up to constant absorption) such that
\begin{align*}
&\Big|\ \mathbb E_{\mathbb P_T}\!\big[\varphi(s)\,R(s,a,s')\big]\ -\
\mathbb E_{\mathbb P_{\mathrm{mix}}^{I,w}}\!\big[\varphi(s)\,R(s,a,s')\big]\ \Big|\\
&\qquad\qquad\qquad\qquad\quad \le\ \beta\,(R_1\Delta_m+R_2\Delta_w)\ +\ 4\beta\,\varepsilon_H.
\end{align*}
\emph{Proof sketch.}
Repeat the proof of Lemma~\ref{lem:bellman} with the bounded multiplier $\varphi(s)$ inserted throughout and with the target-side law replaced by $\mathbb P_T$. Since $\|\varphi\|_\infty\le 1$, the reward term is still controlled by the same $W_1$ argument up to constant absorption, while the two value terms are handled by the same latent value-head bounds as in the proof of Lemma~\ref{lem:bellman}. This yields the stated residual-form inequality. \qed
\end{corollary}

\section{Proof of Theorem~\ref{thm:bellman-risk}}\label{sec:proof_of_target_risk}
\begin{proof}
Define the target-domain Bellman residual by Eq.~\eqref{eq:def-deltaV-risk}, and let $\mu_T^{s}:=\operatorname{Marg}_{s}(\mathbb P_T)$. Since $\delta_T^V$ is state-measurable and $\ell_V(u)=|u|$, Eq.~\eqref{eq:bellman-risk-def} gives
\[
\mathfrak R_{\mathrm{Bell}}({I,w})
=
\|\delta_T^V\|_{1,\mu_T^{s}}.
\]
For the value residual,
\begin{align*}
\|\delta_T^V\|_{1,\mu_T^{s}}
&\le
\|\delta_{\mathrm{mix}}^V\|_{1,\mu_T^{s}}
+
\|\delta_T^V-\delta_{\mathrm{mix}}^V\|_{1,\mu_T^{s}}.
\end{align*}
By definition of $\varepsilon_V$, the first term satisfies
\[
\|\delta_{\mathrm{mix}}^V\|_{1,\mu_T^{s}}\le \varepsilon_V.
\]
For the transfer term, use the dual representation of the $L_1(\mu_T^{s})$ norm,
\[
\|\delta_T^V-\delta_{\mathrm{mix}}^V\|_{1,\mu_T^{s}}
=
\sup_{\|\varphi\|_\infty\le 1}
\left|
\mathbb E_{s\sim\mu_T^{s}}
\big[\varphi(s)(\delta_T^V(s)-\delta_{\mathrm{mix}}^V(s))\big]
\right|.
\]
Applying the weighted Bellman residual-transfer argument of Lemma~\ref{lem:bellman} with the bounded state multiplier $\varphi(s)$, and using the conditioning/Jensen step that maps the transition-level TD residual to the state residual in Eq.~\eqref{eq:def-deltaV-risk}, gives
\[
\|\delta_T^V-\delta_{\mathrm{mix}}^V\|_{1,\mu_T^{s}}
\le
\beta\,(R_1\,\Delta_m+R_2\,\Delta_w)+4\beta\,\varepsilon_H.
\]
Combining the two terms yields Eq.~\eqref{eq:target-risk-V}, which is exactly the Bellman-risk bound under the absolute-loss choice in Eq.~\eqref{eq:bellman-risk-def}.
\end{proof}

\section{Variational characterization of gate--then--weight fusion}
\label{sec:proof_variational}

\subsection{Proof of Proposition~\ref{prop:optimal-gate}}
\begin{proof}
Let $\mathcal I_{n_{\mathrm{keep}}}:=\{I\in\{0,1\}^N:\sum_{i=1}^N I_i=n_{\mathrm{keep}}\}$ and write $d_i^m:=d^m(\tau_i^S)$. The objective of Eq.~\eqref{eq:gate-subproblem} can be written as
\[
\Delta_m(I)=\max_{i:I_i=1} d_i^m.
\]
Take any feasible gate $I\in\mathcal I_{n_{\mathrm{keep}}}$. Suppose there exist indices $i,j$ such that $I_i=1$, $I_j=0$, and $d_i^m>d_j^m$. Define a new feasible gate $\widetilde I$ by swapping the two decisions: $\widetilde I_i=0$, $\widetilde I_j=1$, and $\widetilde I_\ell=I_\ell$ for $\ell\notin\{i,j\}$. Then
\begin{align*}
\Delta_m(\widetilde I)
&=
\max\!\Big(\max_{\ell:\,I_\ell=1,\,\ell\neq i} d_\ell^m,\ d_j^m\Big) \\
&\le
\max\!\Big(\max_{\ell:\,I_\ell=1,\,\ell\neq i} d_\ell^m,\ d_i^m\Big)
=
\Delta_m(I).
\end{align*}
Thus exchanging a retained fragment with larger score for a discarded fragment with smaller score cannot increase the objective. Repeating this exchange argument until no such inversion remains yields a gate that retains exactly the $n_{\mathrm{keep}}$ smallest values of $d_i^m$. Any such gate attains the minimum possible maximum mismatch, namely the $n_{\mathrm{keep}}$-th order statistic $q_{n_{\mathrm{keep}}}$. Therefore any optimizer has the threshold form
\[
I_i^{\star}=\mathbf 1\!\big(d_i^m\le q_{n_{\mathrm{keep}}}\big),
\]
up to arbitrary tie-breaking among fragments with score equal to $q_r$.
\end{proof}

\subsection{Proof of Proposition~\ref{prop:optimal-weight}}
\begin{proof}
Fix a gate $I$ and abbreviate $\mathcal G:=\mathcal G(I)$. Write the optimization problem as
\[
\min_{w_i\ge 0,\,\sum_{i\in\mathcal G}w_i=1}
\lambda_w\sum_{i\in\mathcal G} w_i c_i
+\tau_{\mathrm{ent}}\sum_{i\in\mathcal G} w_i\log\frac{w_i}{u_I(i)}.
\]
Because the KL term is strictly convex on the simplex and the linear cost is convex, the objective is strictly convex and thus admits a unique minimizer. Introduce the Lagrangian
\[
\mathcal L(w,\nu)
=
\lambda_w\sum_{i\in\mathcal G} w_i c_i
+\tau_{\mathrm{ent}}\sum_{i\in\mathcal G} w_i\log\frac{w_i}{u_I(i)}
+\nu\Big(\sum_{i\in\mathcal G} w_i-1\Big).
\]
For any interior optimum, stationarity gives
\[
\frac{\partial \mathcal L}{\partial w_i}
=
\lambda_w c_i+\tau_{\mathrm{ent}}\Big(\log\frac{w_i}{u_I(i)}+1\Big)+\nu
=0,
\qquad i\in\mathcal G.
\]
Solving for $w_i$ yields
\begin{align*}
&\log\frac{w_i}{u_I(i)}
=-\frac{\lambda_w c_i}{\tau_{\mathrm{ent}}}-1-\frac{\nu}{\tau_{\mathrm{ent}}} \\
&\quad\Longrightarrow\quad
w_i
=
\nu'\,u_I(i)\exp(-\lambda_w c_i/\tau_{\mathrm{ent}}),
\end{align*}
where $\nu':=\exp(-1-\nu/\tau_{\mathrm{ent}})$ is a normalization constant independent of $i$. Enforcing $\sum_{i\in\mathcal G} w_i=1$ gives
\[
\nu'=
\Big(\sum_{j\in\mathcal G}u_I(j)\exp(-\lambda_w c_j/\tau_{\mathrm{ent}})\Big)^{-1},
\]
which proves Eq.~\eqref{eq:gibbs-weights}. Since the objective is strictly convex, this optimizer is unique. Setting $d_i^w:=-c_i$ and $\eta_w:=\lambda_w/\tau_{\mathrm{ent}}$ yields the exponential form used in TAF.
\end{proof}

\subsection{Proof of Theorem~\ref{thm:variational-dfdt}}
\begin{proof}
Let $\lambda_m=\beta R_1$ and $\lambda_w=\beta R_2$. Theorem~\ref{thm:bellman-risk} implies that for any admissible pair $(I,w)$,
\[
\mathfrak{R}_{\mathrm{Bell}}\!\big({I,w}\big)
\le
\varepsilon_V+\lambda_m\Delta_m(I)+\lambda_w\Delta_w(I,w)+4\beta\varepsilon_H.
\]
By the samplewise majorization condition in Eq.~\eqref{eq:samplewise-majorizer}, with $c_i$ denoting the chosen population cost $c_i^{\kappa}$,
\[
\Delta_w(I,w)
\le
\sum_{i\in\mathcal G(I)} w_i c_i.
\]
For completeness, one sufficient construction is the product coupling $\Pi_w:=\sum_{i\in\mathcal G(I)}w_i\,\delta_{u_i}\otimes\mathbb P_T$, whose first marginal is $\mathbb P_S^{I,w}$ and whose second marginal is $\mathbb P_T$. Its transport cost is $\sum_i w_i\int C(u_i,v){\rm d}\mathbb P_T(v)$, so the Wasserstein optimum is no larger than this weighted samplewise cost. Substituting the majorizer gives
\[
\mathfrak{R}_{\mathrm{Bell}}\!\big({I,w}\big)
\le
\varepsilon_V+\lambda_m\Delta_m(I)
+\lambda_w\sum_{i\in\mathcal G(I)} w_i c_i
+4\beta\varepsilon_H.
\]
Therefore the Bellman risk is upper-bounded by the unregularized part of Eq.~\eqref{eq:variational-surrogate}. Adding the nonnegative entropy term gives the regularized surrogate $\mathcal J_{\tau_{\mathrm{ent}}}(I,w)$. This extra term is not required by the Bellman-risk upper bound itself; it regularizes the within-support transport minimization so that the optimizer does not collapse onto a single lowest-cost sample and instead remains close to the reference law $u_I$.

Items (1) and (2) in the theorem follow directly from Proposition~\ref{prop:optimal-gate} and Proposition~\ref{prop:optimal-weight}, respectively, because Eq.~\eqref{eq:sequential-surrogate-program} is the sequential combination of those two subproblems. Item (3), namely Eq.~\eqref{eq:variational-risk-bound}, is the preceding Bellman-risk upper bound evaluated at $(I^{\star},w^{\star})$.
\end{proof}

\subsection{Population-to-empirical transition}
\label{sec:population_empirical_transition}

\begin{proposition}[Population-to-empirical stability of the stagewise TAF optimizer]
\label{prop:population-empirical-transition}
Let $\hat I$ and $\hat w$ denote the empirical structural gate and empirical within-support weights obtained by replacing $d_i^m$ and $c_i$ in Eqs.~\eqref{eq:gate-subproblem} and \eqref{eq:weight-subproblem} by $\hat d_i^m$ and $\hat c_i$, respectively. If these empirical scores satisfy
\[
\sup_i |\hat d_i^m-d_i^m|\le \epsilon_m,
\qquad
\sup_i |\hat c_i-c_i|\le \epsilon_w,
\]
then the resulting empirical gate and weights satisfy the stagewise population suboptimality bounds
\begin{equation}
\label{eq:empirical-gate-bound}
\Delta_m(\hat I)
\le
\min_{I:\,\sum_i I_i=n_{\mathrm{keep}}}\Delta_m(I)+2\epsilon_m,
\end{equation}
and
\begin{equation}
\label{eq:empirical-weight-bound}
\begin{aligned}
&\lambda_w\!\sum_{i\in\mathcal G(\hat I)} \hat w_i c_i
+\tau_{\mathrm{ent}}\,\mathrm{KL}(\hat w\|u_{\hat I})
\\
&\le
\inf_{w}
\Big\{
\lambda_w\!\sum_{i\in\mathcal G(\hat I)} w_i c_i
+\tau_{\mathrm{ent}}\,\mathrm{KL}(w\|u_{\hat I})
\Big\}
+2\lambda_w\epsilon_w.
\end{aligned}
\end{equation}
\end{proposition}
\begin{proof}
Let $\widehat\Delta_m(I):=\max_{i:I_i=1}\hat d_i^m$. The uniform error assumption $\sup_i|\hat d_i^m-d_i^m|\le \epsilon_m$ implies
\[
|\widehat\Delta_m(I)-\Delta_m(I)|\le \epsilon_m
\qquad\text{for every feasible }I.
\]
Let $I^{\star}$ minimize $\Delta_m(I)$ and let $\hat I$ minimize $\widehat\Delta_m(I)$ over the budget-constrained gate set. Then
\begin{align*}
\Delta_m(\hat I)
&\le \widehat\Delta_m(\hat I)+\epsilon_m
\le \widehat\Delta_m(I^{\star})+\epsilon_m \\
&\le \Delta_m(I^{\star})+2\epsilon_m
=
\min_{I:\,\sum_i I_i=n_{\mathrm{keep}}}\Delta_m(I)+2\epsilon_m,
\end{align*}
which proves Eq.~\eqref{eq:empirical-gate-bound}.

For the weight stage, fix the empirical gate $\hat I$ and define the population and empirical objectives on $\Delta(\mathcal G(\hat I))$ by
\[
F(w):=\lambda_w\sum_{i\in\mathcal G(\hat I)} w_i c_i+\tau_{\mathrm{ent}}\,\mathrm{KL}(w\|u_{\hat I}),
\]
\[
\widehat F(w):=\lambda_w\sum_{i\in\mathcal G(\hat I)} w_i \hat c_i+\tau_{\mathrm{ent}}\,\mathrm{KL}(w\|u_{\hat I}).
\]
Since $\sum_i w_i=1$ and $\sup_i|\hat c_i-c_i|\le \epsilon_w$,
\[
|\widehat F(w)-F(w)|
\le
\lambda_w\sum_i w_i\,|\hat c_i-c_i|
\le
\lambda_w\epsilon_w,
\ \  \forall w\in\Delta(\mathcal G(\hat I)).
\]
Let $w_{\hat I}^{\star}$ minimize $F$ on $\Delta(\mathcal G(\hat I))$, and let $\hat w$ minimize $\widehat F$ on the same simplex. Then
\begin{align*}
F(\hat w)
&\le \widehat F(\hat w)+\lambda_w\epsilon_w
\le \widehat F(w_{\hat I}^{\star})+\lambda_w\epsilon_w \\
&\le F(w_{\hat I}^{\star})+2\lambda_w\epsilon_w
=
\inf_{w\in\Delta(\mathcal G(\hat I))}F(w)+2\lambda_w\epsilon_w,
\end{align*}
which is exactly Eq.~\eqref{eq:empirical-weight-bound}.
\end{proof}

\section{Algorithm details of TAF-DT}\label{sec:alg-details}
\begin{algorithm*}[ht]
\caption{TAF-DT Training}
\label{alg:train}
\begin{algorithmic}[1]
\REQUIRE Source dataset $\mathcal D_{\rm src}$, target dataset $\mathcal D_{\rm tar}$, keep ratio $\xi$, context length $K$, batch size $N$, target-update rate $\eta_{\rm exp}$
\STATE Initialize DT policy $\pi_\theta$, twin critics $Q_{\phi_1},Q_{\phi_2}$, target critics $Q_{\phi'_1},Q_{\phi'_2}$, auxiliary token critics $V_\varphi,Q_\psi$, and command network $C_\omega$
\STATE Pre-compute MMD discrepancies $\{d_i^m\}$ by Eq.~\eqref{eq:mmd_cost} for source fragments and admit the top-$\xi\%$ fragments to obtain the gate $I_i=\mathbf 1(d_i^m\le q_\xi)$
\STATE Pre-compute reference OT-row costs $\{\hat c_i\}$ on the retained support, calibrate them to $\{\widetilde c_i\}$ via Eq.~\eqref{eq:norm-ot-dev}, set $d_i^w=-\widetilde c_i$ by Eq.~\eqref{eq:ot-dist}, and form source weights $w_i\propto u_I(i)\exp(\eta_w d_i^w)$
\STATE Construct the fused empirical sampler corresponding to $\mathbb P_{\mathrm{mix}}^{I_m,w}=(1-\beta)\mathbb P_T+\beta\mathbb P_S^{I_m,w}$
\STATE Train the auxiliary critics $V_\varphi,Q_\psi$ on $\mathbb P_{\mathrm{mix}}^{I_m,w}$ using Eqs.~\eqref{V_loss} and \eqref{Q_loss}
\STATE Relabel the fused trajectories with advantage tokens $A_t=Q_\psi(s_t,a_t)-V_\varphi(s_t)$ and train the command network $C_\omega$
\FOR{each gradient step}
    \STATE Sample an advantage-conditioned mini-batch $x_{t-K+1:t}\sim \mathbb P_{\mathrm{mix}}^{I_m,w}$
    \STATE Update the twin critics $Q_{\phi_1},Q_{\phi_2}$ by minimizing Eq.~\eqref{eq:q_loss}
    \STATE Update the target critics via $\phi'_j\leftarrow\eta_{\rm exp}\phi_j+(1-\eta_{\rm exp})\phi'_j$ for $j\in\{1,2\}$
    \STATE Update the DT policy $\pi_\theta$ by minimizing Eq.~\eqref{eq:pi_loss}
\ENDFOR
\end{algorithmic}
\end{algorithm*}

\begin{algorithm*}[ht]
\caption{TAF-DT Inference}
\label{alg:infer}
\begin{algorithmic}[1]
\REQUIRE Trained DT policy $\pi_{\theta}$, trained command network $C_\omega$, sequence length $K$, (optional) normalization stats $(\mu_A,\sigma_A)$ from training, environment $\mathcal{M}_T$
\STATE \textcolor{blue}{// No critics or OT/MMD are needed at test time. We only use $C_\omega$ to produce command tokens and $\pi_\theta$ to act.}
\STATE Initialise circular buffers for the last $K$ tokens:
\[
\mathsf{S}\leftarrow[\ ],\quad \mathsf{A}\leftarrow[\ ],\quad \mathsf{C}\leftarrow[\ ],\quad \mathsf{T}\leftarrow[\ ],\quad \mathsf{M}\leftarrow[\ ]
\]
\STATE Reset environment; receive initial state $s_1$ and set $t\leftarrow 1$
\WHILE{episode not terminal}
    \STATE \textcolor{blue}{// Compute command token from the current state}
    \STATE $c_t^{\rm raw}\leftarrow C_\omega(s_t)$
    \STATE \textbf{if} training used standardized advantages (cf.\ Eq.\,(\ref{eq:adv-std})) \textbf{then} $c_t\leftarrow c_t^{\rm raw}$ \textbf{else} $c_t\leftarrow \dfrac{c_t^{\rm raw}-\mu_A}{\sigma_A+\varepsilon}$ \textbf{end if}
    \STATE \textcolor{blue}{// Update rolling context (pad left with zeros and mask invalid tokens)}
    \STATE Append $s_t$ to $\mathsf{S}$, $c_t$ to $\mathsf{C}$, $t$ to $\mathsf{T}$, and $1$ to $\mathsf{M}$; keep only the last $K$ entries of each
    \STATE Let $\mathsf{S}_{t-K+1:t}$, $\mathsf{C}_{t-K+1:t}$, $\mathsf{T}_{t-K+1:t}$, and $\mathsf{M}_{t-K+1:t}$ be the length-$K$ sequences after left-padding with zeros
    \STATE Define $\mathsf{A}_{t-K+1:t-1}$ as the last $K-1$ actions (left-padded with zeros); if $t=1$ then $\mathsf{A}_{t-K+1:t-1}$ is all zeros and the first mask entries in $\mathsf{M}$ are $0$
    \STATE \textcolor{blue}{// Policy inference with command-conditioned tokens}
    \STATE $a_t \leftarrow \pi_{\theta}\!\big(\mathsf{S}_{t-K+1:t}, \mathsf{A}_{t-K+1:t-1}, \mathsf{C}_{t-K+1:t}, \mathsf{T}_{t-K+1:t}, \mathsf{M}_{t-K+1:t}\big)$
    \STATE Execute $a_t$ in $\mathcal{M}_T$; observe $(r_t, s_{t+1})$
    \STATE Append $a_t$ to $\mathsf{A}$ (keep the last $K-1$ entries); set $t\leftarrow t+1$ and $s_t\leftarrow s_{t+1}$
\ENDWHILE
\STATE \textbf{return} trajectory $\tau=\{(s_t,a_t,r_t)\}_{t=1}^{T}$
\end{algorithmic}
\end{algorithm*}

\textbf{Computing method of normalized scores.}
Because raw returns are not directly comparable across environments, we follow D4RL \citep{fu2020d4rl} and report the Normalized Score (NS):
\begin{equation}\label{eq:normalized-score}
\mathrm{NS} \;=\; \frac{\hat J - \hat J_{\text{rand}}}{\hat J_{\text{exp}} - \hat J_{\text{rand}}} \times 100,
\end{equation}
where \(\hat J\) is the empirical return of the learned policy, \(\hat J_{\text{exp}}\) is the expert policy’s empirical return, and \(\hat J_{\text{rand}}\) is the empirical return of a random policy. By construction, \(\mathrm{NS}=100\) corresponds to expert-level performance and \(\mathrm{NS}=0\) corresponds to random performance. See Appendix~C.1 of~\cite{lyu2025cross} for dataset details about $\hat J_{\text{rand}}$ and $\hat J_{\text{exp}}$.

\textbf{Normalisation of OT-based costs.}
To make the empirical OT costs $\hat c_i$ numerically stable across tasks and batches, we apply a min--max cost calibration on the retained support:
\begin{equation}\label{eq:norm-ot-dev}
\widetilde c_i
=
\frac{\hat c_i-\min_{j\in\mathcal G(I)}\hat c_j}{\max_{j\in\mathcal G(I)}\hat c_j-\min_{j\in\mathcal G(I)}\hat c_j+\varepsilon}.
\end{equation}
This mapping gives $\widetilde c_i\in[0,1]$ up to the small stabilizer $\varepsilon$ and preserves the ordering induced by $\hat c_i$. We then set the implementation feasibility score $d_i^w=-\widetilde c_i\in[-1,0]$, so the corresponding unnormalized implementation weights
\begin{equation}\label{eq:exp-weight}
\tilde w_i\ :=\ \exp(\eta_w\,d_i^{w})\ \in\ \big[e^{-\eta_w},\,1\big]
\end{equation}
are bounded, preventing gradient explosion while still down-weighting OT-distant source fragments. After multiplying by the gate $I_i$, these factors are renormalized on the retained support to realize the source distribution $\mathbb P_S^{I_m,w}$. Thus, $\tilde w_i$ is the practical counterpart of the Gibbs factor in Eq.~\eqref{eq:ot-weight}, and Eq.~\eqref{eq:norm-ot-dev} makes cost-based weighting scale-free across domains and robust to outliers in $\hat c$.

\textbf{Command network $C_\omega$ trained via expectile regression.}
The command network $C_\omega$ outputs an \emph{advantage-consistent command token} that replaces RTG during inference. We first form per-token advantages from the auxiliary value estimators,
\begin{equation}\label{eq:adv}
A_i\ :=\ Q_\psi(s_i,a_i)\ -\ V_\varphi(s_i),
\end{equation}
and optionally standardize them to improve numerical stability:
\begin{equation}\label{eq:adv-std}
\tilde A_i\ :=\ \frac{A_i-\mu_A}{\sigma_A+\varepsilon},
\end{equation}
where $(\mu_A,\sigma_A)$ are the mean and standard deviation of $\{A_i\}$. Define the empirical sample weight
\[
\bar\omega_i:=
\begin{cases}
1, & u_i\in\mathcal D_{\rm tar},\\
I_i\,\tilde w_i, & u_i\in\mathcal D_{\rm src},
\end{cases}
\]
with $I_i$ the MMD gate and $\tilde w_i$ from Eq.~\eqref{eq:exp-weight}. We then train $C_\omega$ to predict a high-expectile summary of the state-conditional advantage distribution using the asymmetric least-squares loss
\begin{equation}\label{eq:expectile-loss}
\mathcal L_C(\omega)
=
\mathbb E_{\mathcal D_{\rm tar}\cup\mathcal D_{\rm src}}
\Big[
\bar\omega_i\,\rho_\zeta\!\big(\tilde A_i-C_\omega(s_i)\big)
\Big],
\end{equation}
where $\zeta\in(0.5,1)$ (e.g., $0.7\!\sim\!0.9$) controls the degree of optimism. The weighting $\bar\omega_i$ ensures that target tokens are always used, while source tokens contribute only when they survive the MMD gate and receive a sufficiently large OT-based feasibility weight.

\textbf{Usage of $C_\omega$ at inference.}
At test time, we compute a scalar command $c_t = C_\omega(s_t)$ from the current state and feed it as the conditioning token to the DT in place of RTG. Because $c_t$ is derived from standardized advantages rather than raw returns, it offers a return-distribution- and horizon-robust guidance signal that is shared across domains. This stabilizes token-level conditioning under dynamics shifts and reduces stitching artefacts caused by dynamics-induced return-distribution and effective-horizon changes.

\begin{table}[htb]
    \centering
    \caption{Default hyperparameter setup for TAF-DT. For exact hyperparameter setups for each experiment, please refer to the code base.}
    \label{tab:hyperparameters}
    \begin{tabular}{ll}
        \toprule
        \textbf{Hyperparameter} & \textbf{Value} \\
        \midrule
        Number of layers & $4$ \\
        Number of attention heads & $4$ \\
        Embedding dimension & $256$ \\
        Context length $K$ & $\{5, 10, 20\}$ \\
        Dropout & $0.1$ \\
        Learning rate & $3 \times 10^{-4}$ \\
        Optimizer & Adam \\
        Discount factor & $0.99$ \\
        Nonlinearity & ReLU \\
        Target update rate & $5 \times 10^{-3}$ \\
        \midrule
        Pretrained Q network hidden size & $(256, 256, 256)$ \\
        Pretrained V network hidden size & $(256, 256, 256)$ \\
        Command network hidden size & $(256, 256, 256)$ \\
        Number of sampled latent variables $M$ & $10$ \\
        Standard deviation of Gaussian distribution & $\sqrt{0.1}$ \\
        OT Cost function & cosine \\
        MMD fragment keep ratio $\xi_{\rm MMD}\%$ & $50\%$ \\
        Policy regularization coefficient $\eta_{\rm reg}$  & $\{0.3, 0.4, 0.5, 0.6\}$ \\
        Source domain Batch size & $64$ \\
        Target domain Batch size & $128$ \\
        \bottomrule
    \end{tabular}
\end{table}

\section{Environment Details and Setup}
\label{app:env-setup}

\subsection{Environment Specifications}
\label{app:env-specs}

Following recent work in offline reinforcement learning \citep{fu2020d4rl} and cross-domain policy adaptation \citep{lyu2025cross}, we evaluate TAF-DT on four continuous control tasks from the MuJoCo simulator \citep{todorov2012mujoco}: \texttt{HalfCheetah-v2}, \texttt{Hopper-v2}, \texttt{Walker2d-v2}, and \texttt{Ant-v2}. All environments use the MuJoCo v2 physics engine via OpenAI Gym \citep{brockman2016openai}.

We use the standard D4RL configuration with a maximum episode length of 1000 timesteps for all tasks. Episodes terminate early only if the agent enters a failure state (e.g., the Hopper falls over). We adopt the original D4RL reward specifications without modification, including implicit survival bonuses and center-of-mass velocity components where applicable.

\textbf{Transformer context length.} Due to varying trajectory structure across tasks, we use set task-specific context windows. For morphology and kinematic shifts, we used $K=20$ for the HalfCheetah task and $K=5$ for the Hopper, Walker2D and Ant tasks. For gravity shifts, we used $K=10$ for the HalfCheetah task, $K=20$ for the Hopper and Walker2D task, and $K=5$ for the Ant task. These results help balance model capacity and empirical performance in dynamic shifts.

\subsection{Evaluation protocol} 

\textbf{Normalized score.} For each trained policy, we report the normalized score (NS) averaged over 10 evaluation episodes using deterministic actions (mean of the policy distribution). The normalized score is computed as:
\begin{equation}
\mathrm{NS} = \frac{\hat{J} - \hat{J}_{\text{rand}}}{\hat{J}_{\text{exp}} - \hat{J}_{\text{rand}}} \times 100,
\end{equation}
where $\hat{J}$ is the empirical return of the learned policy, $\hat{J}_{\text{exp}}$ is the expert policy's return in the \emph{target domain}, and $\hat{J}_{\text{rand}}$ is the random policy's return in the target domain. Reference values for $\hat{J}_{\text{exp}}$ and $\hat{J}_{\text{rand}}$ under each domain shift are provided in Appendix~C.1 of~\cite{lyu2025cross}.

\begin{table*}[ht]
\caption{\textbf{Performance comparison of cross-domain offline RL algorithms given gravity shifts.} The meanings of each abbreviation are the same as those listed in Table~\ref{tab:morph-shift}. We report \emph{normalized} target-domain performance (\emph{mean $\pm$ std.}) over \textbf{five} seeds; We \textbf{bold} and highlight the best cell.}
\label{tab:gravity-shift}
\centering
\small

\begin{tabular}{
  @{}ll
  | >{$}c<{$}                             
  | >{$}c<{$} >{$}c<{$} >{$}c<{$}         
  | >{$}c<{$} >{$}c<{$} >{$}c<{$}         
  | >{$}c<{$}                             
  @{}}
\toprule
Source & Target & \text{IQL} & \text{DARA} & \text{IGDF} & \text{OTDF} & \text{DT} & \text{QT} & \text{DADT} & \text{TAF-DT} \\
\midrule
half-m   & medium         & 39.6 & \bestplain{41.2} & 36.6 & 40.7 & 28.4 & 40.2 & 36.6 & 7.3\!\pm\!4.3 \\
half-m   & medium-expert  & 39.6 & 40.7 & 38.7 & 28.6 & 45.1 & \bestplain{62.1} & 34.7 & 7.8\!\pm\!2.4 \\
half-m   & expert         & 42.4 & 39.8 & 39.6 & 36.1 & 41.8 & \bestplain{49.1} & 45.7 & 13.8\!\pm\!11.7 \\
half-m-r & medium         & 20.1 & 17.6 & 14.4 & 21.5 & 18.3 & \bestplain{51.6} & 25.3 & 5.9\!\pm\!2.5 \\
half-m-r & medium-expert  & 17.2 & 20.2 & 10.0 & 14.7 & 17.2 & 2.1  & \bestplain{27.1} & 5.7\!\pm\!2.4 \\
half-m-r & expert         & 20.7 & 22.4 & 15.3 & 11.4 & 7.8  & 2.5  & \bestplain{23.6} & 17.9\!\pm\!10.0 \\
half-m-e & medium         & 38.6 & 37.8 & 37.7 & 39.5 & 35.1 & \bestplain{69.3} & 44.0 & 5.6\!\pm\!2.6 \\
half-m-e & medium-expert  & 39.6 & 39.4 & 40.7 & 32.4 & 38.2 & \bestplain{67.0} & 32.0 & 6.0\!\pm\!2.9 \\
half-m-e & expert         & 43.4 & 45.3 & 41.1 & 26.5 & 40.7 & \bestplain{68.5} & 37.8 & 21.9\!\pm\!8.3 \\
\midrule
hopp-m   & medium         & 11.2 & 17.3 & 15.3 & 32.4 & 19.7 & 16.1 & 12.8 & \bestcell{82.4}{7.5} \\
hopp-m   & medium-expert  & 14.7 & 15.4 & 15.1 & 24.2 & 11.6 & 12.8 & 11.6 & \bestcell{56.7}{23.3} \\
hopp-m   & expert         & 12.5 & 19.3 & 14.8 & \bestplain{33.7} & 11.0 & 12.3 & 12.7 & 22.7\!\pm\!11.2 \\
hopp-m-r & medium         & 13.9 & 10.7 & 15.3 & 31.1 & 14.2 & 19.9 & 22.6 & \bestcell{58.8}{27.5} \\
hopp-m-r & medium-expert  & 13.3 & 12.5 & 15.4 & 24.2 & 13.7 & 22.3 & 16.6 & \bestcell{66.4}{17.7} \\
hopp-m-r & expert         & 11.0 & 14.3 & 16.1 & 31.0 & 19.6 & 18.7 & 21.5 & \bestcell{42.4}{16.6} \\
hopp-m-e & medium         & 19.1 & 18.5 & 22.3 & 26.4 & 13.0 & 14.3 & 11.6 & \bestcell{63.6}{18.0} \\
hopp-m-e & medium-expert  & 16.8 & 16.0 & 16.6 & 28.3 & 13.6 & 14.4 & 11.7 & \bestcell{39.2}{27.8} \\
hopp-m-e & expert         & 20.9 & 23.9 & 26.0 & 44.9 & 13.1 & 14.0 & 13.2 & \bestcell{75.4}{19.0} \\
\midrule
walk-m   & medium         & 28.1 & 28.4 & 22.1 & 36.6 & 36.2 & 29.5 & 37.4 & \bestcell{43.1}{7.2} \\
walk-m   & medium-expert  & 35.7 & 30.7 & 35.4 & 44.8 & 38.2 & \bestplain{45.2} & 29.1 & 21.5\!\pm\!4.5 \\
walk-m   & expert         & 37.3 & 36.0 & 36.2 & 44.0 & 46.4 & 44.0 & \bestplain{54.0} & 22.6\!\pm\!5.7 \\
walk-m-r & medium         & 14.6 & 14.1 & 11.6 & 32.7 & 28.6 & 18.9 & 24.8 & \bestcell{44.1}{2.9} \\
walk-m-r & medium-expert  & 15.3 & 15.9 & 13.9 & \bestplain{31.6} & 26.9 & 20.0 & 29.8 & 22.7\!\pm\!7.0 \\
walk-m-r & expert         & 15.8 & 15.7 & 15.2 & \bestplain{31.3} & 28.0 & 28.6 & 20.1 & 26.7\!\pm\!11.8 \\
walk-m-e & medium         & 39.9 & 41.6 & 33.8 & 30.2 & 42.5 & \bestplain{56.7} & 45.5 & 41.4\!\pm\!3.2 \\
walk-m-e & medium-expert  & 49.1 & 45.8 & 44.7 & 53.3 & 39.4 & \bestplain{55.8} & 30.6 & 23.6\!\pm\!5.1 \\
walk-m-e & expert         & 40.4 & 56.4 & 45.3 & \bestplain{61.1} & 39.6 & 47.4 & 34.5 & 23.6\!\pm\!8.9 \\
\midrule
ant-m    & medium         & 10.2 & 9.4  & 11.3 & 45.1 & 22.0 & 15.3 & 12.4 & \bestcell{61.0}{8.7} \\
ant-m    & medium-expert  & 9.4  & 10.0 & 9.4  & 33.9 & 17.7 & 14.1 & 14.0 & \bestcell{52.8}{15.7} \\
ant-m    & expert         & 10.2 & 9.8  & 9.7  & 33.2 & 18.9 & 15.7 & 13.7 & \bestcell{58.3}{5.8} \\
ant-m-r  & medium         & 18.9 & 21.7 & 19.6 & 29.6 & 18.8 & 13.9 & 21.4 & \bestcell{66.9}{8.5} \\
ant-m-r  & medium-expert  & 19.1 & 18.3 & 20.3 & 25.4 & 13.9 & 13.6 & 18.5 & \bestcell{44.9}{5.5} \\
ant-m-r  & expert         & 18.5 & 20.0 & 18.8 & 24.5 & 14.6 & 10.6 & 17.7 & \bestcell{38.8}{11.1} \\
ant-m-e  & medium         & 9.8  & 8.1  & 8.9  & 18.6 & 11.3 & 11.6 & 20.6 & \bestcell{68.9}{6.4} \\
ant-m-e  & medium-expert  & 9.0  & 6.4  & 7.2  & 34.0 & 18.0 & 12.2 & 15.2 & \bestcell{45.7}{18.0} \\
ant-m-e  & expert         & 9.1  & 10.4 & 9.2  & 23.2 & 11.6 & 10.0 & 15.3 & \bestcell{41.2}{11.9} \\
\midrule
\multicolumn{2}{@{}c|}{Total Score} &
  825.0 & 851.0 & 803.6 & 1160.7 & 874.7 & 1020.3 & 895.7 & \bestplain{1347.3} \\
\bottomrule
\end{tabular}
\end{table*}

\textbf{Random seeds.} 
For main experiments (Tables \ref{tab:morph-shift},  \ref{tab:kinematic-shift}, \ref{tab:gravity-shift}), we evaluate policies using the final checkpoint from each of 5 random seeds (1, 2, 3, 4, 5) and report the mean ± standard deviation across seeds. Ablation studies use different evaluation protocols as specified in individual table captions.

\textbf{Computational resources.} 
Each experiment is trained on a single NVIDIA RTX~4090 GPU (24\,GB). The total training time per task is typically 6--7 hours for 100{,}000 gradient steps (see Appendix Section~\ref{sec:training-time} for detailed training time analyses).

\textbf{Datasets.} 
Our source-domain datasets are drawn from the D4RL benchmark~\citep{fu2020d4rl}. Target-domain datasets are constructed under three types of dynamics shifts: \textbf{gravity shifts}, \textbf{kinematic shifts}, and \textbf{morphology shifts}. The exact environment modifications (XML parameter changes, joint range restrictions, body geometry alterations) follow Appendix~C.2--C.4 of~\cite{lyu2025cross}. We refer readers to their work for detailed XML code and implementation specifics.

\begin{table*}[ht]
\centering
\caption{Ablation study on the expectile coefficient for TAF-DT. Results show the mean and standard deviation over the last 3 checkpoints from a single seed.}
\label{tab:expectile_ablation}
\begin{tabular}{lll|cccc}
\toprule
Source  & Target & Shift & $\zeta=0.5$ & $\zeta=0.55$ & $\zeta=0.7$ & $\zeta=0.98$ \\
\midrule
half-m-e & medium & kinematic & $41.5\pm0.2$ & $41.5\pm0.3$ & $\textbf{41.7}\pm0.4$ & $41.5\pm0.3$ \\
ant-m & medium & gravity & $\textbf{61.8}\pm4.3$ & $58\pm2.9$ & $60.5\pm16.3$ & $58.5\pm3.8$ \\
walk-m & medium-expert & morph &  $31.1\pm16.1$ & $32.6\pm3.4$ & $39.7\pm6.6$ & $\textbf{41.1}\pm5.3$ \\
hopp-m-e & medium & kinematic & $61.9\pm3.0$ & $61.4\pm2.7$ & $61.0\pm3.3$ & $\textbf{62.2}\pm2.7$ \\
\bottomrule
\end{tabular}
\end{table*}

\begin{table*}[htb]
\centering
\caption{Ablation study on the 1-dim $r$ channel of OT computation. For each seed, we compute the mean over the last 3 checkpoints; reported values are mean $\pm$ std across the 3 per-seed means.}
\label{tab:ot_r_ablation}
\begin{tabular}{lll|cc}
\toprule
Source   & Target        & Shift     & OT w the $r$ channel & OT w/o the $r$ channel  \\
\midrule
ant-m    & medium        & kinematic & $53.1\pm6.4$	& $\textbf{56.9}\pm3.1$\\
hopp-m    & medium        & kinematic & $59.3\pm11.4$ & $\textbf{63.7}\pm3.6$\\
hopp-m    & medium          & morph & $\textbf{57.8}\pm11.9$ & $53.0\pm13.4$ \\
half-m-r & medium-expert & morph & $43.8\pm0.8$ & $\textbf{44.6}\pm0.4$ \\
walk-m-r & medium & gravity & $40.7\pm4.1$ & $\textbf{44.3}\pm0.8$ \\
\bottomrule
\end{tabular}
\end{table*}

\textbf{State normalization.}
For all TAF-DT experiments, we apply per-dimension normalization to state observations:
\begin{equation}
\hat{s}_i = \frac{s_i - \mu_i}{\sigma_i + \epsilon},
\end{equation}
where $\mu_i$ and $\sigma_i$ are the mean and standard deviation of dimension $i$ 
computed over all states in the  target training dataset, 
and $\epsilon = 10^{-8}$ prevents division by zero.
This normalization is applied identically during training and evaluation.

We normalize states but \emph{not} actions or rewards. 
The normalization statistics are computed once before training and remain fixed 
throughout all experiments for a given source-target pair.

\textbf{Two-stage gate--then--weight preprocessing.}
We process source data in two sequential stages to balance data utilization and cross-domain compatibility: an MMD fragment gate fixes the retained support, and an OT-induced soft weighting rule determines how strongly each retained transition contributes.

\textbf{Cost computations before the main training loop.}
Before training begins, we pre-compute the following quantities:
\begin{itemize}
    \item Maximum Mean Discrepancy (MMD) between each source state fragment
    \[
    (s_t^S,s_{t+1}^S,\ldots,s_{t+K}^S)
    \]
    and each target state fragment
    \[
    (s_t^T,s_{t+1}^T,\ldots,s_{t+K}^T),
    \]
    using the same latent state encoder $z=f_\phi(s)$ as in Eq.~\eqref{eq:mmd_cost}. This keeps the appendix consistent with the state-structure gate used in the main formulation.
    \item Reference optimal-transport (OT) row costs $\hat c_i$ between retained source transitions $(s_{t}^{S}, a_{t}^{S}, r_{t}^{S}, s_{t+1}^{S})$ and target transitions $(s_{t}^{T}, a_{t}^{T}, r_{t}^{T}, s_{t+1}^{T})$, followed by calibrated Gibbs weights $w_i\propto\exp(-\eta_w\widetilde c_i)$.
\end{itemize}

Thus, the two preprocessing scores use different granularities for different purposes: MMD uses latent state fragments to measure state-structure compatibility, whereas OT uses transition features to measure local action--transition feasibility.

\textbf{Stage 1 (MMD filtering for fragment selection).}
We first rank source fragments by their MMD distance and retain only the $\xi_{\rm MMD}\%=50\%$ with the lowest MMD, yielding a pre-filtered source dataset $\mathcal{D}_{\text{src}}^{\text{MMD}}$.

\textbf{Stage 2 (OT soft weighting for action feasibility).}
On $\mathcal{D}_{\text{src}}^{\text{MMD}}$, we compute the reference OT-row cost $\hat c_i$, calibrate it to $\widetilde c_i$, and assign each retained transition the Gibbs feasibility weight $w_i\propto\exp(-\eta_w\widetilde c_i)$. During training, source transitions are sampled from this weighted retained source law, or equivalently used with the corresponding source-side loss weights. We use 64 weighted source transitions and 128 target transitions per mini-batch, giving a final batch size of 192. Thus OT is used as a soft feasibility weighting rule rather than a second hard top-$\xi_{\rm OT}$ filtering stage.

\textbf{Hyperparameter overview.} 
Table~\ref{tab:hyperparameters} summarises the hyperparameters of a compact Transformer backbone for TAF-DT (multi-head attention with moderate depth, width, and context length), trained with standard optimisation and stabilisation choices (Adam, dropout, ReLU, soft target updates, and a fixed discount factor). Method-specific settings for TAF-DT include pretrained critics, a command network, an MMD fragment gate, and an OT-based soft weighting module with cosine ground cost. We adopt asymmetric batch sizes to emphasise target-domain learning while still leveraging feasibility-weighted source fragments. The defaults were chosen from small grids over context length and mixing strength and were found to be robust across seeds and tasks.

\section{Wider main experimental results under gravity shifts}\label{sec:wider-results}

We further report comprehensive results for gravity shifts in Table~\ref{tab:gravity-shift}. \textsc{TAF-DT} attains the best mean performance on \textbf{19} out of \textbf{36} tasks and achieves the highest total normalized score of \textbf{1347.3}, exceeding \textsc{IQL} by \textbf{63.3\%} (1347.3 vs.\ 825.0), the second-best approach \textsc{OTDF} by \textbf{16.1\%} (1347.3 vs.\ 1160.7), and the strong sequence baseline \textsc{QT} by \textbf{32.0\%} (1347.3 vs.\ 1020.3). The combination of total score and wins count again indicates broad robustness rather than domination by only a few tasks. Breaking down by environment family, \textsc{TAF-DT} dominates \emph{hopper} (wins \textbf{8} out of \textbf{9}) and \emph{ant} (wins \textbf{9} out of \textbf{9}), remains competitive on \emph{walker2d} (wins \textbf{2} out of \textbf{9}), while \emph{halfcheetah} is largely led by \textsc{QT}. Notably, \textsc{TAF-DT} delivers large margins in challenging settings such as \texttt{hopp-grav-me2e} ( \(\textbf{75.4}\pm 19.0\) ) and \texttt{ant-grav-m2m} ( \(\textbf{61.0}\pm 8.7\) ), where we define $\text{grav}\!=\!\text{gravity}$, $\text{kinematic}\!=\!\text{kin}$, and $\text{morph}\!=\!\text{morphology}$. Overall, these results corroborate \textsc{TAF-DT}'s aggregate strength under gravity shifts while making clear that some local rows remain favorable to other methods.

\section{Ablation experiments}\label{sec:ablation}

\subsection{Ablation setup}
For the ablation studies, we start from the default TAF-DT configuration and vary one component or hyperparameter at a time, keeping all other settings identical to the main experiments. To control computational cost, we evaluate on a small set of representative source--target pairs that cover all three shift types (kinematic, gravity, morphology); the exact tasks are listed in the corresponding tables.

Concretely, we (i) compare full TAF-DT (MMD+OT) against MMD-only and OT-only variants and sweep the MMD keep-ratio $\xi$ used by the state-structure gate; (ii) replace advantage-conditioned tokens with RTG tokens; (iii) vary the Q-regularization coefficient $\alpha$; (iv) sweep the expectile coefficient $\zeta$ used in the command network; and (v) modify the OT design by including or excluding the 1-dim reward channel $r$ and, in a preliminary study, by changing the OT cost function (cosine, Euclidean, squared Euclidean).

The fragment-filtering, advantage-token, $Q$-regularization, and token-stitching analyses are reported in the main paper (Section~\ref{sec:main_ablation}). We keep the remaining hyperparameter ablations below for completeness.

\subsection{Ablation on Expectile Coefficient}
We study the expectile coefficient $\zeta$ used in the command network’s expectile regression (Table~\ref{tab:expectile_ablation}). Across most tasks, TAF-DT is fairly insensitive to this choice. For example, on \texttt{half-m-e}$\to$\texttt{medium} (kinematic) all settings yield almost identical performance ($41.5$–$41.7$), and on \texttt{hopp-m-e}$\to$\texttt{medium} (kinematic) scores remain in a narrow band around $61$–$62$, with $\zeta=0.98$ only slightly ahead. For \texttt{ant-m}$\to$\texttt{medium} (gravity), all four values stay within a few points of the best setting $61.8\pm4.3$, showing no sharply tuned optimum.

The main sensitivity appears on \texttt{walk-m}$\to$\texttt{medium-expert} (morphology), where more optimistic expectiles substantially help: the mean return increases from $31.1\pm16.1$ at $\zeta=0.5$ to $41.1\pm5.3$ at $\zeta=0.98$. Overall, these trends indicate that TAF-DT remains robust over a broad range of $\zeta$ while moderately benefiting from higher, more optimistic values on harder morphology shifts. We therefore adopt $\zeta=0.98$ as the default, as it is best or competitive across all tested tasks while encouraging sufficiently optimistic value-based conditioning.

\subsection{Ablation of the 1-dim reward channel in the OT computation}
We ablate the effect of including the 1-dim reward channel $r$ in the OT cost (Table~\ref{tab:ot_r_ablation}). Across the five tested configurations, \emph{excluding} $r$ improves performance on four tasks and remains competitive on the fifth. For example, on \texttt{ant-m}$\to$\texttt{medium} and \texttt{hopp-m}$\to$\texttt{medium} (both kinematic), dropping $r$ yields higher returns with notably lower variance ($56.9\pm3.1$ vs.\ $53.1\pm6.4$ and $63.7\pm3.6$ vs.\ $59.3\pm11.4$), and similar gains appear on the morphology and gravity cases \texttt{half-m-r}$\to$\texttt{medium-expert} and \texttt{walk-m-r}$\to$\texttt{medium}. Only \texttt{hopp-m}$\to$\texttt{medium} (morphology) sees a modest benefit from including $r$, and both variants exhibit high variance there.

These results suggest that the structural information in $(s,a,s')$ is usually sufficient for effective OT-based alignment, while adding reward can inject extra noise, especially under kinematic shifts where action feasibility is the main concern. In our main experiments, we retain $(s,a,s',r)$ for consistency with the original design, and leave more nuanced ways of exploiting reward information (e.g., task-dependent weighting or learned costs) to future work.

\begin{table*}[ht]
\centering
\caption{Ablation study on cost functions used in OT. Results show the mean and standard deviation over the last 3 checkpoints from a single seed.}
\label{tab:ot_dist}
\begin{tabular}{lll|ccc}
\toprule
Source   & Target        & Shift   & Cosine & Euclidean & Squared Euclidean \\
\midrule
ant-m & medium & gravity & $\textbf{60.8}\pm13.6$ & $59.6\pm9.1$ & $\textbf{60.8}\pm9.0$ \\
walk-m-r & medium & gravity & $36.5\pm5.3$ & $\textbf{44.2}\pm5.8$ & $43.2\pm4.5$ \\
hopp-m & medium & kinematic & $\textbf{65.0}\pm0.9$ & $60.8\pm5.5$ & $64.0\pm2.2$ \\
half-m & expert & kinematic & $\textbf{30.7}\pm9.8$ & $16.9\pm12.9$ & $15.6\pm4.9$ \\
half-m-e & medium & morph & $\textbf{43.6}\pm0.3$ & $42.9\pm1.1$ & $43.3\pm1.1$ \\
\bottomrule
\end{tabular}
\end{table*}

\begin{table*}[htb]
\centering
\caption{Training time analyses for TAF-DT on D4RL datasets.}
\label{tab:training_times}
\resizebox{\textwidth}{!}{
    \begin{tabular}{l|cccccc}
    \toprule
    Dataset & \begin{tabular}{@{}c@{}}Preproc. \\ Peak VRAM\end{tabular} & \begin{tabular}{@{}c@{}}Preproc. \\ Time\end{tabular} & \begin{tabular}{@{}c@{}}Training \\ Peak VRAM\end{tabular} & \begin{tabular}{@{}c@{}}Command Network \\ Time\end{tabular} & \begin{tabular}{@{}c@{}}VAE Policy \\ Time\end{tabular} & \begin{tabular}{@{}c@{}}Total Training \\ Time\end{tabular} \\
    \midrule
    hopp-m$\to$\text{medium} & 3638 MB & 97.8s & 2108 MB & 2,314.4s & 585.6s & 24,086.1s \\
    \text{half-m}$\to$\text{medium} & 10398 MB & 98.0s & 4410 MB & 2,536.8s & 827.6s & 24,122.03s \\
    \bottomrule
    \end{tabular}
}
\end{table*}

\subsection{Ablation on OT cost functions}

We ablate the cost function used in the optimal transport (OT) computation (Table~\ref{tab:ot_dist}), comparing cosine cost (our default, following OTDF \citep{Lyu2025OTDF}) with Euclidean and squared Euclidean variants. The results show moderate, task-dependent sensitivity to this choice. For gravity shifts, \texttt{ant-m}$\to$\texttt{medium} is largely indifferent between cosine and squared Euclidean ($60.8\pm13.6$ vs.\ $60.8\pm9.0$), while \texttt{walk-m-r}$\to$\texttt{medium} clearly prefers $\ell_2$-based costs (up to $44.2\pm5.8$) over cosine ($36.5\pm5.3$). For kinematic shifts, \texttt{hopp-m}$\to$\texttt{medium} strongly favors cosine ($65.0\pm0.9$) over Euclidean variants, and \texttt{half-m}$\to$\texttt{expert} also sees cosine dominate ($30.7\pm9.8$ vs.\ $16.9\pm12.9$ and $15.6\pm4.9$). On \texttt{half-m-e}$\to$\texttt{medium} (morphology), all three costs are nearly tied around $43$.

Overall, performance gaps are usually modest but clearly task- and shift-dependent, indicating that the learned embedding is fairly robust to cost formulation while still exhibiting geometry-specific preferences. We retain cosine cost as the default to stay consistent with OTDF and because it offers competitive or best performance on most tested tasks, while future work could explore adaptive or learned cost metrics tailored to specific shift types.

\section{Efficiency and Overhead}\label{sec:training-time}

We report wall-clock time statistics for TAF-DT on two representative settings: \texttt{hopp-m}$\to$\texttt{medium} under gravity shift and \texttt{half-m}$\to$\texttt{medium} under morphology shift. Table~\ref{tab:training_times} summarizes the main phases of the pipeline:

\begin{itemize}
    \item \textbf{Preprocessing phase}: precomputes MMD scores, reference OT-row costs, and Gibbs source weights (97.8--98.0 seconds per tested dataset).
    \item \textbf{Training phase}: includes command network optimization (2,314--2,536 seconds) and policy training (585--827 seconds), with total runtime 6--7 hours for 100k gradient updates.
\end{itemize}

The MMD gating and OT soft-weighting preprocessing accounts for \textbf{$\le 0.5\%$} of total training time (about 98 seconds versus roughly 24,000 seconds), so the extra preprocessing cost is negligible relative to end-to-end training. This is consistent with TAF-DT's design: the stitchability-aware gate--then--weight step adds little overhead compared with the full optimization budget.


\end{document}